\documentclass{article} 
\usepackage{iclr2027_conference,times}

\usepackage{amsmath,amsfonts,bm}

\def\eqref#1{equation~\ref{#1}}

\def\1{\bm{1}}

\def\vs{{\bm{s}}}

\DeclareMathAlphabet{\mathsfit}{\encodingdefault}{\sfdefault}{m}{sl}
\SetMathAlphabet{\mathsfit}{bold}{\encodingdefault}{\sfdefault}{bx}{n}

\usepackage[hyphens]{url}
\usepackage{graphicx}
\usepackage{hyperref}
\definecolor{linkblue}{HTML}{1A6FC4}
\hypersetup{colorlinks=true,urlcolor=linkblue,linkcolor=black,citecolor=black}
\usepackage{caption}
\usepackage{subcaption}
\usepackage{amsmath,amssymb,bm}
\usepackage{enumitem}
\usepackage{algorithm,algorithmic}
\usepackage{multirow,makecell}
\usepackage{booktabs}
\usepackage{tabularx}
\usepackage[table,dvipsnames]{xcolor}
\usepackage{colortbl}
\usepackage{longtable}
\usepackage{placeins}
\usepackage{amsthm}
\usepackage{microtype}

\usepackage{float}

\definecolor{janusblue}{HTML}{DCEBFA}
\definecolor{janusgreen}{HTML}{E9F6EE}
\definecolor{janusamber}{HTML}{F5C244}
\definecolor{janussecond}{HTML}{7E93AD}
\definecolor{janusrule}{HTML}{9AA9B7}
\definecolor{janusbest}{HTML}{16518F}
\definecolor{janussecondbest}{HTML}{67AACF}
\newcommand{\appendixtablesetup}{\scriptsize\renewcommand{\arraystretch}{0.96}\setlength{\tabcolsep}{2.4pt}}
\newcommand{\wideappendixtablesetup}{\scriptsize\renewcommand{\arraystretch}{0.98}\setlength{\tabcolsep}{2.2pt}}

\newcommand{\maintablesetup}{\footnotesize\renewcommand{\arraystretch}{1.13}\setlength{\tabcolsep}{4.5pt}\arrayrulecolor{janusrule}}
\newcommand{\tightmaintablesetup}{\footnotesize\renewcommand{\arraystretch}{1.08}\setlength{\tabcolsep}{2.8pt}\arrayrulecolor{janusrule}}
\newcommand{\bestcell}[1]{\cellcolor{janusblue!70}\textbf{#1}}
\newcommand{\secondcell}[1]{\cellcolor{janusgreen!70}#1}
\newcommand{\cmark}{\checkmark}

\setlist{itemsep=0pt,topsep=2pt,parsep=0pt}
\newtheorem{proposition}{Proposition}

\title{JANUS: Online Jacobian-Aligned Infill\\for Black-Box Optimization}

\author{%
Hongyuan Yu\textsuperscript{1}, Pufan Xu\textsuperscript{2}\thanks{Corresponding author.}, Jiaojiao Yi\textsuperscript{1}, Yiding Tian\textsuperscript{1}, Mingrui Sun\textsuperscript{1}, Jiayuan Lu\textsuperscript{1} \\[2pt]
\textbf{Changyuan Wen\textsuperscript{1}} \\[2pt]
\textsuperscript{1}Multimedia Department, Xiaomi Inc. \quad \textsuperscript{2}School of Integrated Circuits, Tsinghua University \\[2pt]
{\small\texttt{\{yuhongyuan, yijiaojiao, tianyiding,}} \\
{\small\texttt{sunmingrui1, lujiayuan, wenchangyuan\}@xiaomi.com}} \\
{\small\texttt{xpf22@mails.tsinghua.edu.cn}}%
}

\iclrfinalcopy 
\begin{document}
\raggedbottom
\renewcommand{\topfraction}{0.92}
\renewcommand{\bottomfraction}{0.78}
\renewcommand{\textfraction}{0.06}
\renewcommand{\floatpagefraction}{0.85}
\setlength{\textfloatsep}{5pt plus 1pt minus 2pt}
\setlength{\floatsep}{5pt plus 1pt minus 2pt}
\setlength{\intextsep}{5pt plus 1pt minus 2pt}
\setlength{\abovecaptionskip}{2pt}
\setlength{\belowcaptionskip}{1pt}
\captionsetup{skip=2pt}
\maketitle
\lhead{}\rhead{}\chead{} 

\begin{abstract}
Population optimizers such as CMA-ES, DE, and multi-objective evolutionary algorithms drive search mainly through selection signals that are scalar or rank based: such a signal indicates that one candidate outperforms another, but not the local direction responsible for the improvement. JANUS (\emph{Jacobian-Aligned Newton-Unified Search}) is a plug-and-play infill module that extracts this missing local geometric signal without replacing the host optimizer. It estimates a local Jacobian from the recent evaluation trace; the same Jacobian yields both a damped Gauss--Newton exploitation candidate and a trace-preserving exploration metric, reserving a fraction of the host's per-generation candidate slots for geometry-guided infill rather than spending evaluations on top of the host's budget. Unlike MetaBBO methods, JANUS needs no offline training or task distribution, estimating this geometry on the fly from the current run alone, while the host keeps full control of selection, survival, covariance adaptation, and step-size control. Under same-protocol comparisons, JANUS improves the CMA-ES host on \textbf{11--15/16} BBOB functions across $d\in\{30,100,500\}$. It also attains the best mean error on \textbf{13 of the 16} functions at $d{=}500$ in the complete NN-BBO/MetaBBO baseline comparison, with no training cost, and yields a $936\times$ geometric-mean improvement over the host on a $d{=}1000$ BBOB subset. On structured and multi-objective tasks, JANUS gives the best mean cost on 1135-dimensional UAV path planning ($-12.8\%$ vs.\ the strongest baseline), and it improves SMS-EMOA/AGE-MOEA2 hosts on 12/38 multi-objective tasks with zero significant regressions. Code is available at \url{https://github.com/hongyuanyu/JANUS}.
\end{abstract}

\section{Introduction}
\label{sec:intro}

Black-box optimization (BBO) appears in hyperparameter tuning, simulation calibration, controller design, and scientific workflows where gradients are unavailable or unreliable~\citep{JMLR:v13:bergstra12a,hansen2016cma,ma2023metabox}. Population optimizers are successful partly because they avoid brittle modeling assumptions: they rank candidates by scalar fitness and adapt from selection statistics. Yet scalar rankings are a deliberately thin interface. A ranking indicates that one sample is better, but it does not reveal the local direction responsible for the improvement, nor how the next generation of samples should be shaped. We test this gap across scalar BBOB, structured UAV planning, and multi-objective optimization (Fig.~\ref{fig:dashboard}).

\begin{figure}[t]
\centering
\includegraphics[width=0.8\linewidth]{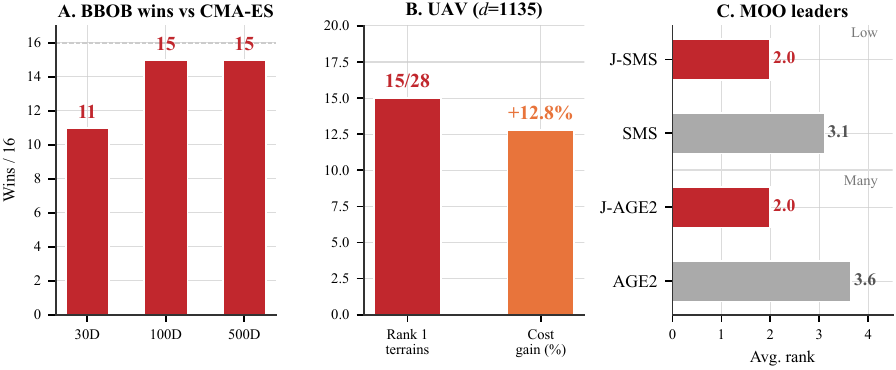}
\caption{\textbf{JANUS at a glance.} Headline BBOB, UAV, and MOO results.}
\label{fig:dashboard}
\end{figure}

\paragraph{Population hosts leave local geometry unused.}
We identify a \emph{rank-geometry gap} between two families of derivative-free methods. On one side, population hosts (e.g., CMA-ES, DE) adapt search distributions using selection signals that are scalar or rank based. They are sample-efficient and robust, but they discard the local residual geometry that classical trust-region methods exploit. On the other side, derivative-free trust-region methods (NEWUOA, BOBYQA, DFO-LS) build local quadratic or least-squares models from the trace. These methods replace the host optimizer entirely, however, and spend a large fraction of the budget on model maintenance. Restricted-covariance variants such as LM-CMA~\citep{loshchilov2014lm} and VkD-CMA~\citep{akimoto2014vkd} partially address scalability by learning a restricted covariance basis, but they still learn this basis from host selection statistics rather than from the local residual Jacobian itself. We show (Prop.~\ref{prop:restricted}) that a covariance basis fixed independently of the local residual Jacobian $J$ cannot, in general, align its exploration metric with the inverse Hessian $T=(J^\top J+\lambda I)^{-1}$ of the local residual model.

\paragraph{Closing the gap with one Jacobian, two uses.}
JANUS turns the recent search trace into residual coordinates and estimates a local Jacobian $J$. If evaluations expose components, for example
\begin{equation}
    f(x) = \tfrac{1}{m}\textstyle\sum_{i=1}^{m} r_i(x),
    \label{eq:component}
\end{equation}
we use normalized residual coordinates such as $s_i(x)=\sqrt{r_i(x)}$; if only scalar values are observed, JANUS instead derives residual coordinates from local linear or quadratic fits on the recent window. Explicit components are useful but not required: what matters is a local residual map $s(x)$ whose Jacobian summarizes how recent changes in $x$ affect the local objective model.

The central observation is that the same $J$ answers two questions usually handled separately: no prior method derives both a bounded exploitation candidate and a trace-preserving exploration metric from the same online computation. Earlier per-component $\arg\min$ assembly can be viewed as a block-diagonal special case restricted to exploitation: it assembles the best fragment from each component, but discards the coupling across coordinates and says nothing about population spread. JANUS keeps the full $J^\top J$ and reuses it for both exploitation and exploration.

JANUS is implemented as \emph{infill}, not a replacement optimizer: it asks the host for native candidates, adds a few bounded Jacobian-aligned candidates, evaluates the union, and lets the host perform its usual selection, survival, covariance, and step-size updates. Because JANUS only contributes candidates, it can augment many population-based search algorithms without meta-training or task-specific tuning, while preserving the host optimizer's safeguards.

\paragraph{On-instance, not meta-learning.} JANUS estimates the residual Jacobian online from the current black-box problem and discards it after the run: there is no training distribution, no learned policy, and no parameter transfer between problems. This trade-off is deliberate. MetaBBO methods~\citep{ma2023metabox,lange2023discovering,ICLR2024_b742b708,Li_Wu_Zhang_Wang_2025} can outperform on-instance methods when the test task matches the training distribution, but they introduce a generalization axis and require offline meta-training; JANUS occupies the complementary regime, working on any single task without prior training. Whether JANUS's local-geometry signal could be augmented with a learned prior over residual structures is left for future work. We validate this across three problem classes, four host optimizers, and one industrial deployment. Our contributions are:
\begin{enumerate}[leftmargin=1.6em,itemsep=1pt]
\item We show (Prop.~\ref{prop:restricted}) that restricted-covariance variants (LM-CMA, VkD-CMA) cannot, in general, align a host-fixed covariance basis with the local curvature induced by a residual Jacobian.
\item We introduce online construction of residual coordinates to extract reusable local geometry from black-box search traces, using either observed components or coordinates from a local linear or quadratic fit; from one estimated Jacobian, we derive two bounded infill mechanisms: a damped Gauss--Newton exploitation candidate and a trace-preserving inverse Hessian exploration distribution.
\item Under same-protocol comparisons, JANUS variants improve over the CMA-ES host on 11--15/16 BBOB functions across $d\in\{30,100,500\}$ (Table~\ref{tab:bbob_dense_30_100_std}, Table~\ref{tab:bbob_dense_500d}) and achieve the best mean cost on 28-terrain UAV path planning ($-12.8\%$ vs the strongest baseline). The same mechanism improves both SMS-EMOA and AGE-MOEA2 hosts on 12/38 multi-objective tasks with zero significant regressions, and transfers to a DE host (86--96\% win rate).
\end{enumerate}

\section{Related Work}
\label{sec:related}

\paragraph{Evolution strategies and covariance adaptation.}
CMA-ES adapts a Gaussian search distribution from selected samples and remains one of the strongest general-purpose black-box optimizers~\citep{hansen2016cma}. Scalable variants such as LM-CMA~\citep{loshchilov2014lm} and VkD-CMA~\citep{akimoto2014vkd} restrict the covariance basis to reduce cost in high dimension, but still learn geometry from selection statistics. FOCAL~\citep{10.1145/1830483.1830563} modifies the adaptation rule itself to recover inverse-Hessian structure near the optimum. JANUS is complementary: it estimates residual geometry externally and injects candidates without changing the host update rule.

\paragraph{Model-based derivative-free optimization.}
Classical DFO methods build local interpolation, trust-region, or least-squares models from black-box evaluations, including NEWUOA, BOBYQA, COBYLA, and DFO-LS~\citep{powell2006newuoa,powell2009bobyqa,cartis2019dfols}. They are most effective when model maintenance is affordable and the local model can serve as the entire optimizer. JANUS does not maintain a trust region, choose incumbents, or replace the search distribution; it imports only the residual-model signal into a population loop, so the host retains global search pressure while JANUS spends a small infill budget on geometry-aware proposals.

\paragraph{Natural-gradient and learned optimizer views.}
Natural evolution strategies and information-geometric optimization interpret distribution updates as gradient steps on a statistical manifold~\citep{wierstra2014natural,ollivier2017information}; CMA-ES itself can be viewed through approximate natural-gradient adaptation~\citep{akimoto2010bidirectional}. JANUS shares the idea that proposals should encode useful geometry, but its geometry source is a residual normal matrix $J^\top J$ rather than an expected-fitness gradient estimated from rankings. MetaBBO methods learn optimizers or operators across task distributions~\citep{ma2023metabox,lange2023discovering,ICLR2024_b742b708,Li_Wu_Zhang_Wang_2025}; JANUS instead estimates geometry on the current instance only, avoiding offline meta-training and task-distribution shift.

\paragraph{Multi-objective optimization and infill.}
Multi-objective evolutionary algorithms maintain convergence and diversity through nondominated sorting, hypervolume selection, decomposition, reference vectors, or geometry-aware survival~\citep{deb2002nsga,beume2007sms,zhang2007moead,deb2014nsga3,cheng2016rvea}; expensive MOO also studies infill criteria such as expected hypervolume improvement~\citep{emmerich2006single}. JANUS does not replace survival or reference-vector logic. Attached to SMS-EMOA or AGE-MOEA2~\citep{panichella2022agemoea2}-style hosts, it only changes how a small fraction of offspring are proposed, using the objective vector itself as residual coordinates. Surrogate-assisted evolutionary algorithms~\citep{jin2011surrogate} instead build global surrogates (GP, RBF) to pre-screen candidates; JANUS uses only a local fit on the recent window, scaling to $d{>}1000$.

\section{Method}
\label{sec:method}

Table~\ref{tab:main_notation} gives the symbols needed to read this section; App.~\ref{app:notation} keeps the full notation and variant list.

\subsection{Host interface}
JANUS assumes only a population-style ask/evaluate/tell loop. At generation $g$, a host such as CMA-ES samples candidates (e.g., $x_k\sim\bar x+\sigma\mathcal{N}(0,C)$), evaluates them, and updates its internal state from selected or surviving samples. JANUS does not change this loop: it reads a recent evaluation window, proposes a small number of additional candidates, and returns all evaluated points to the host, so selection pressure, survival logic, covariance adaptation, and step-size control remain host-defined. A host baseline and its JANUS-augmented counterpart always share the same total evaluation budget; JANUS reallocates part of each generation's candidate slots to infill proposals, rather than drawing evaluations on top of the budget.

\subsection{Online residual coordinates}

The only object JANUS needs is a local residual map $s(z)\in\mathbb{R}^m$ on a recent window $\mathcal{W}=\{(x_i,f_i)\}_{i=1}^n$. We first express candidates in host-normalized coordinates, $z_i=D^{-1}(x_i-\mu)$, using the current local scale. JANUS then constructs $s$ in one of three ways:
\begin{itemize}[leftmargin=1.4em,itemsep=1pt]
\item \textbf{Observed components}: if evaluations expose nonnegative per-scenario, per-constraint, or per-objective terms, use normalized residual coordinates such as $s_i(x)=\sqrt{r_i(x)}$; otherwise use the shifted or standardized component value.
\item \textbf{Linear scalar mode}: if only scalar values are observed, fit $\hat f(z)=a^\top z+b$ on $\mathcal{W}$ and use $s_1(z)=a^\top z$ after normalization. This gives a stable slope estimate for the GN candidate.
\item \textbf{Quadratic scalar mode}: fit $\hat f(z)=z^\top A z+b^\top z+c$, decompose $A=\sum_i\lambda_i u_i u_i^\top$, and use $s_i=\sqrt{|\lambda_i|}\,u_i^\top z$ as curvature-aligned coordinates.
\end{itemize}
The scalar modes are not fallbacks; they handle the common case where the benchmark exposes only scalar values. In these modes $s$ is a locally fitted \emph{pseudo-residual} coordinate system, not a Gauss--Newton decomposition of the objective: the linear mode drops the intercept $b$ (encoding only the local slope direction), and the quadratic mode uses $\sqrt{|\lambda_i|}$, discarding the curvature sign, so $\|s\|^2$ does not reproduce $\hat f$. It is a heuristic slope- and curvature-aligned frame for generating bounded candidates, not an exact residual factorization of $f$. Linear mode is conservative and mainly supports exploitation; quadratic mode supplies multiple curvature directions for CMIX but can be noisy on strongly multimodal landscapes.

Given residual-coordinate values on $\mathcal{W}$, JANUS fits $s(x_i)\approx b+Jz_i$ by ridge regression and forms $G=J^\top J$, the Gauss--Newton approximation to the Hessian of the local residual model $\|s(x)\|^2$. The regression is intentionally local: old samples are discarded, inputs are host-normalized, and ridge regularization suppresses unsupported directions. Unlike global surrogate optimization, JANUS does not require the model to predict the objective across the whole box — only a local coordinate system that produces a bounded candidate and a temporary proposal metric. Survival remains the host's decision.

\begin{table}[t]
\centering
\footnotesize\renewcommand{\arraystretch}{1.02}\setlength{\tabcolsep}{3pt}
\caption{Core notation used in the method.}
\label{tab:main_notation}
\begin{tabular*}{\linewidth}{@{\extracolsep{\fill}} lp{0.62\linewidth}}
\toprule
Symbol & Meaning \\
\midrule
$x, f(x)$ & Candidate and evaluated black-box value. \\
$\mathcal{W}$ & Recent evaluated window used for local fitting. \\
$r(x)$ & Observed components/objective vector when available. \\
$s(x)$ & Residual coordinates, observed or scalar-derived. \\
$J$ & Online residual Jacobian fitted on $\mathcal{W}$. \\
$G=J^\top J$ & Gauss--Newton Hessian approximation; metric for GN exploitation and CMIX exploration. \\
$\lambda,\rho$ & Damping coefficient and CMIX candidate fraction. \\
GN / CMIX & Bounded Newton candidate / inverse Hessian mixing. \\
JANUS-GN / JANUS & Exploitation-only variant / GN plus CMIX variant. \\
\bottomrule
\end{tabular*}
\end{table}

\subsection{Exploitation: damped Gauss--Newton infill}
Let $x_b$ be the best point in the current window and $s_b=s(x_b)$. JANUS forms the damped Gauss--Newton step
\begin{equation}
\Delta_{\mathrm{GN}}=-(G+\lambda I)^{-1}J^\top s_b,
\label{eq:gn_step}
\end{equation}
and proposes $x_b+\eta\Delta_{\mathrm{GN}}$. To avoid destabilizing CMA-style step-size control, the proposal is bounded-injected following the clipping rule of \citet{hansen2011injecting}. In whitened coordinates, its step is clipped to $\|y_e\|\le\kappa\sqrt{d}$. Thus the injected candidate has the scale of a normal host sample, while its direction comes from the residual Jacobian rather than from historical-best reuse or random perturbation.

\subsection{Exploration: inverse Hessian CMIX}
For exploration, JANUS inverts the same Gauss--Newton curvature $G$ to form a regularized inverse-Hessian covariance,
\begin{equation}
T=(G+\lambda_{\mathrm{mix}}I)^{-1},\qquad
C\leftarrow(1-w)C+w\frac{\operatorname{tr}(C)}{\operatorname{tr}(T)}T.
\label{eq:cmix}
\end{equation}
The trace factor preserves the total sampling energy of $C$ and changes only its shape: directions that are locally flat under $G$ receive more variance, while sharp directions receive less. Setting $w=0$ recovers JANUS-GN, the conservative exploitation-only fallback used when the curvature estimate is unstable (Sec.~\ref{sec:exp-rq4}).

\subsection{Dimension-adaptive mixing weight}
\label{sec:weight}
\begin{equation}
w=w_{\mathrm{base}}\cdot\frac{\min(m,|\mathcal{W}|)}{d},\qquad w\in[w_{\min},w_{\max}].
\label{eq:weight}
\end{equation}
The numerator measures how much independent residual information is available, while the denominator reflects the ambient dimension over which that information must be spread. The bounds $[w_{\min},w_{\max}]$ prevent the metric from either disappearing or dominating the host covariance. This rule is fixed across all experiments.

\subsection{Alternative Designs and Scope}
Standalone damped GN or DFO-LS spends the whole budget maintaining a local model, whereas JANUS contributes only bounded candidates and leaves selection and scale control to the host; the empirical gap is large, with standalone local models much weaker than host+JANUS under the same high-dimensional BBOB budgets (App.~\ref{app:baselines}). Restricted-covariance hosts such as LM-CMA and VkD-CMA are complementary rather than substitutes: their covariance basis is learned from host selection statistics, while JANUS derives directions from the residual Jacobian (Prop.~\ref{prop:restricted}; App.~\ref{app:restricted-cov}, Table~\ref{tab:restricted_cov_baselines}). Learning the mixing weight would instead add an offline task distribution and a new generalization axis; the fixed rule in Eq.~\ref{eq:weight}, by contrast, is shared across BBOB, UAV, and MOO.

\begin{algorithm}[t]
\caption{JANUS: online metric-guided infill}
\label{alg:janus}
\begin{algorithmic}[1]
\REQUIRE Host $\mathcal{A}$, window $n$, infill ratio $\rho$, damping $\lambda$
\WHILE{budget remains}
    \STATE Ask $\mathcal{A}$ for native candidates; reserve $\rho$ slots for infill
    \STATE Fit $J$ from window $\mathcal{W}$; form $G=J^\top J$
    \STATE GN candidate: $-(G+\lambda I)^{-1}J^\top s_b$; bounded inject
    \STATE CMIX candidates: sample from trace-normalized $(G+\lambda I)^{-1}$
    \STATE Evaluate all candidates; host performs selection/survival
\ENDWHILE
\end{algorithmic}
\end{algorithm}

\section{Theoretical Properties}
\label{sec:theory}

\begin{proposition}[A fixed rank-$k$ basis cannot express arbitrary residual-aligned metrics]
\label{prop:restricted}
Let $C_{\rm host}=B M B^\top$ be a purely rank-$k$ host covariance whose basis $B\in\mathbb{R}^{d\times k}$, $k<d$, has orthonormal columns and is fixed by host adaptation statistics independently of the local residual Jacobian $J\in\mathbb{R}^{m\times d}$. Let $T=(J^\top J+\lambda I)^{-1}$ be the inverse Hessian metric induced by $J$ ($\lambda>0$), and let $P_\perp=I-BB^\top$ be the orthogonal projector onto $\operatorname{span}(B)^\perp$. Then no restricted-covariance update of the form $C'=(1-\alpha)C_{\rm host}+\alpha B M' B^\top$ can equal $T$; quantitatively,
\[
\|C'-T\|_F\ \ge\ \|P_\perp T P_\perp\|_F\ \ge\ \frac{\sqrt{d-k}}{\sigma_{\max}(J)^2+\lambda}\ >\ 0,
\]
where $\sigma_{\max}(J)$ is the largest singular value of $J$. The bound is independent of $M,M',\alpha$ and of the host block, so no choice of restricted-covariance parameters removes it. The gap $\|P_\perp T P_\perp\|_F$ is exactly the residual-aligned curvature that lives in $\operatorname{span}(B)^\perp$, which the restricted family sets to zero regardless of its parameters.
\end{proposition}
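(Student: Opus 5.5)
The argument is a projection bound. Any $C'$ of the stated form lies in the range of the restricted family: $C'=B\big((1-\alpha)M+\alpha M'\big)B^\top$. Hence $P_\perp C'=0$ and $C'P_\perp=0$, because $P_\perp B=0$. Consequently
\[
P_\perp (C'-T) P_\perp = -P_\perp T P_\perp .
\]

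The first inequality follows because compression by an orthogonal projector does not increase the Frobenius norm. Writing $\|X\|_F^2=\|P X P\|_F^2+\|P X P_\perp\|_F^2+\|P_\perp X P\|_F^2+\|P_\perp X P_\perp\|_F^2$ with $P=BB^\top$, or simply using $\|P_\perp X P_\perp\|_F\le\|P_\perp\|_2\|X\|_F\|P_\perp\|_2$, we get $\|C'-T\|_F\ge\|P_\perp T P_\perp\|_F$. This holds for every $M,M',\alpha$, which gives the claimed independence from the host parameters.

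For the second inequality, choose an orthonormal basis $V\in\mathbb{R}^{d\times(d-k)}$ of $\operatorname{span}(B)^\perp$, so that $P_\perp=VV^\top$ and $\|P_\perp T P_\perp\|_F=\|V^\top T V\|_F$. The $(d-k)\times(d-k)$ matrix $V^\top T V$ is symmetric positive definite. Its eigenvalues are bounded below, via Courant–Fischer or the Rayleigh quotient, by $\lambda_{\min}(T)=1/(\sigma_{\max}(J)^2+\lambda)$, since $T=(J^\top J+\lambda I)^{-1}$ has eigenvalues $1/(\sigma_i(J)^2+\lambda)$ and $\sigma_i(J)=0$ for the unused dimensions. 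The Frobenius norm of a PSD matrix is the $\ell_2$ norm of its eigenvalues. With $d-k$ eigenvalues each at least $1/(\sigma_{\max}^2+\lambda)$, we obtain $\|V^\top T V\|_F\ge\sqrt{d-k}/(\sigma_{\max}(J)^2+\lambda)$. This is strictly positive because $k<d$ and $\lambda>0$. The non-equality $C'\neq T$ follows immediately.

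No step is a serious obstacle. The only point needing care is the eigenvalue lower bound for the compression $V^\top T V$. I would state it via $u^\top V^\top T V u=(Vu)^\top T (Vu)\ge\lambda_{\min}(T)\|u\|^2$, using $\|Vu\|=\|u\|$, rather than appeal to interlacing. I would also remark that the hypothesis that $B$ is "fixed independently of $J$" is not actually used by the inequality: the bound holds for any orthonormal $B$ with $k<d$. That hypothesis only matters for the interpretive sentence in the statement, and the gap it describes is exactly $\|P_\perp T P_\perp\|_F$ as identified above.
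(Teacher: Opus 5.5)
Your proof is correct and follows the paper's argument. Both use the annihilation $P_\perp C' P_\perp = 0$ and the Frobenius contractivity of compression by $P_\perp$, then lower-bound $\|P_\perp T P_\perp\|_F$ through $\lambda_{\min}(T)=(\sigma_{\max}(J)^2+\lambda)^{-1}$. The paper phrases that last step as the Loewner bound $P_\perp T P_\perp \succeq (\sigma_{\max}(J)^2+\lambda)^{-1}P_\perp$, together with Frobenius monotonicity on positive semidefinite matrices and $\|P_\perp\|_F=\sqrt{d-k}$; your eigenvalue bound on the compression $V^\top T V$ via the Rayleigh quotient is an equivalent phrasing of the same fact.
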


\begin{proposition}[Damped GN descends on the fitted residual model]
\label{prop:descent-main}
Let $\hat s(x_b+\Delta)=s_b+J\Delta$ and $\hat f(\Delta)=\|s_b+J\Delta\|_2^2$. For $\lambda>0$, $\Delta_{\rm GN}=-(J^\top J+\lambda I)^{-1}J^\top s_b$ satisfies
\[
\nabla\hat f(0)^\top\Delta_{\rm GN}=-2s_b^\top J(J^\top J+\lambda I)^{-1}J^\top s_b\le0,
\]
strictly whenever $J^\top s_b\ne0$.
\end{proposition}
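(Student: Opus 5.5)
The argument is a direct computation; the only nontrivial ingredient is that the damped normal matrix is positive definite. First compute the gradient of the fitted model. Expanding
\[
\hat f(\Delta)=s_b^\top s_b+2s_b^\top J\Delta+\Delta^\top J^\top J\Delta
\]
gives $\nabla\hat f(\Delta)=2J^\top(s_b+J\Delta)$. In particular $\nabla\hat f(0)=2J^\top s_b$.

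Next substitute the step from Eq.~\ref{eq:gn_step}. Write $H=J^\top J+\lambda I$ and $v=J^\top s_b$. Then
\[
\nabla\hat f(0)^\top\Delta_{\rm GN}=-2v^\top H^{-1}v=-2s_b^\top J H^{-1}J^\top s_b,
\]
which is the identity in the statement.

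For the sign, note that $J^\top J$ is symmetric positive semidefinite. Hence for any nonzero $u$,
\[
u^\top Hu=\|Ju\|^2+\lambda\|u\|^2\ge\lambda\|u\|^2>0
\]
because $\lambda>0$. So $H$ is symmetric positive definite, and in particular invertible, which also shows $\Delta_{\rm GN}$ is well defined. It follows that $H^{-1}$ is symmetric positive definite, since its eigenvalues are the reciprocals of those of $H$, namely $1/(\mu_i+\lambda)$ with $\mu_i\ge0$.

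Therefore $v^\top H^{-1}v\ge0$, with equality if and only if $v=0$. The lower bound can be made explicit: $v^\top H^{-1}v\ge\|v\|^2/(\sigma_{\max}(J)^2+\lambda)$. Thus the directional derivative is $\le0$, and strictly negative whenever $J^\top s_b\neq0$.

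There is no real obstacle. The only point needing care is the role of $\lambda>0$. It guarantees invertibility and strict positive definiteness even when $J$ is rank-deficient, which is the typical regime $m<d$. It also lets the strict inequality depend only on $J^\top s_b\ne0$ rather than on $s_b\ne0$. If $s_b\in\ker J^\top$, the gradient of the model vanishes, and equality holds trivially.
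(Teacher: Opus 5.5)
Your proposal is correct and follows the paper's proof essentially verbatim: compute $\nabla\hat f(0)=2J^\top s_b$, substitute $\Delta_{\rm GN}$, and use positive definiteness of $J^\top J+\lambda I$ for $\lambda>0$ to sign the quadratic form. You add a few details the paper leaves implicit, namely the explicit check $u^\top Hu\ge\lambda\|u\|^2$, the positive definiteness of $H^{-1}$, and the quantitative bound $v^\top H^{-1}v\ge\|v\|^2/(\sigma_{\max}(J)^2+\lambda)$, all of which are correct.
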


\begin{proposition}[Bounded injection and trace-preserving CMIX]
\label{prop:safety-main}
If an injected point is clipped in host-whitened coordinates to $\|y_e\|\le\kappa\sqrt d$, then $\|x_e-\bar x\|\le\kappa\sigma\lambda_{\max}(C^{1/2})\sqrt d$. Moreover, for $T\succ0$ and $C'=(1{-}w)C+w\operatorname{tr}(C)T/\operatorname{tr}(T)$, $\operatorname{tr}(C')=\operatorname{tr}(C)$.
\end{proposition}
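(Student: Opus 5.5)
The statement has two independent parts, and I would prove them separately by direct computation; no earlier result is needed beyond the definitions in Eq.~\ref{eq:cmix} and the bounded-injection rule.

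\textbf{Bounded injection.} I would write the injected point in host-whitened coordinates as $x_e=\bar x+\sigma C^{1/2}y_e$. This is exactly the convention under which a native host sample $x_k=\bar x+\sigma C^{1/2}z_k$ with $z_k\sim\mathcal N(0,I)$ has $\|z_k\|\approx\sqrt d$, and it is the convention of \citet{hansen2011injecting}.

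Then $\|x_e-\bar x\|=\sigma\|C^{1/2}y_e\|\le\sigma\|C^{1/2}\|_2\|y_e\|$. Since $C$ is symmetric positive semidefinite, $C^{1/2}$ is its symmetric PSD square root, so the operator norm equals the largest eigenvalue: $\|C^{1/2}\|_2=\lambda_{\max}(C^{1/2})$. Applying the clipping bound $\|y_e\|\le\kappa\sqrt d$ gives the claim.

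The only point needing care is to state explicitly that the whitening map is $y\mapsto \sigma C^{1/2}y$ around $\bar x$, using the symmetric root. If a different factor $A$ with $AA^\top=C$ were used, the same bound would still hold, because $\|A\|_2=\sqrt{\lambda_{\max}(C)}=\lambda_{\max}(C^{1/2})$.

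\textbf{Trace preservation.} Since $T\succ0$, we have $\operatorname{tr}(T)>0$, so the normalization $\operatorname{tr}(C)/\operatorname{tr}(T)$ is well defined. By linearity of the trace,
\[
\operatorname{tr}(C')=(1-w)\operatorname{tr}(C)+w\,\frac{\operatorname{tr}(C)}{\operatorname{tr}(T)}\operatorname{tr}(T)=\operatorname{tr}(C).
\]
I would also remark that for $w\in[0,1]$ the update $C'$ is a convex combination of PSD matrices, hence PSD, so it remains a valid covariance. This is not required for the stated identity but justifies calling the update a reshaping of $C$.

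\textbf{Main obstacle.} There is essentially none. The only subtlety is fixing the whitening convention in the first part so that the constant is exactly $\lambda_{\max}(C^{1/2})$ rather than, say, $\lambda_{\max}(C)$.
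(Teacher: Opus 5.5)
Your proof is correct and follows the paper's own argument: the paper also writes $x_e-\bar x=\sigma C^{1/2}y_e$, bounds the displacement by $\sigma\|C^{1/2}\|_2\|y_e\|\le\kappa\sigma\lambda_{\max}(C^{1/2})\sqrt d$, and gets trace preservation directly from linearity of the trace. Your extra remarks are valid but not needed for the statement: $\operatorname{tr}(T)>0$ makes the normalization well defined, the bound does not depend on which factor $A$ with $AA^\top=C$ is used, and $C'$ stays PSD for $w\in[0,1]$.
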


These statements explain why JANUS is infill rather than a replacement optimizer. Prop.~\ref{prop:restricted}'s scope is restricted-basis hosts ($k<d$), true of LM-CMA/VkD-CMA by construction; for a full-rank host such as vanilla CMA-ES, $\operatorname{span}(B)=\mathbb{R}^d$ and the proposition makes no structural claim, so JANUS's advantage there is empirical rather than structural — learning a general full covariance from rank-based updates involves $O(d^2)$ covariance parameters and can therefore be sample- and computation-intensive in high dimensions, while JANUS forms the residual-aligned metric from the fitted $J$ in one step (App.~\ref{app:proofs} separates the structural regime, illustrated by Table~\ref{tab:restricted_cov_baselines}, from this convergence-speed regime). Prop.~\ref{prop:descent-main} is local and model-relative: the injected point descends on the fitted residual chart, but the chart may not describe disconnected basins. The failure mode is bounded rather than open-ended: under the Prop.~\ref{prop:safety-main} clip, the proposal displacement of a misspecified chart stays on the host sampling scale, although its objective value is not guaranteed to improve. Prop.~\ref{prop:safety-main} gives this safety interface: injected points stay on the host sampling scale and CMIX reshapes direction without changing covariance trace. A blockwise assembly rule is recovered as a degenerate zero-coupling GN case. Full proofs are in App.~\ref{app:proofs}.

\textbf{Local versus global exploration.} The inverse Hessian metric can reallocate variance among directions represented in the recent trace, but cannot identify disconnected basins outside the current sampling region; such exploration still requires host diversity, restarts, or another global mechanism. This predicts why CMIX is strongest when the trace contains coherent local curvature, while JANUS-GN is safer when basin discovery or noisy pseudo-residuals dominate.

\section{Experiments}
\label{sec:experiments}

All experiments report final objective values or hypervolume under fixed budgets and independent seeds. Appendix~\ref{app:protocols} gives the full search spaces, evaluation budgets, run counts, and metric conventions for BBOB, UAV, and MOO.

\subsection{Scalar BBOB: Geometry from Scalar Evaluations ($d\in\{30,100,500\}$)}
\label{sec:exp-bbob}

BBOB represents the most challenging setting for our claim: the benchmark exposes only scalar values, so any useful geometry must be recovered from the recent trace rather than observed directly. We use a MetaBox/COCO-compatible BBOB protocol~\citep{ma2023metabox}: search domain $[-5,5]^d$, budget 20,000 function evaluations, 30 independent runs, on the 16 held-out BBOB functions. The CMA-ES host comparison in Fig.~\ref{fig:bbob_heatmap} is the controlled headline: same functions, seeds, population size, initialization, box constraints, and budget, with and without JANUS infill. JANUS-GN injects only the Gauss--Newton candidate, while JANUS additionally enables CMIX.

\paragraph{Host-controlled comparison (headline).}
Fig.~\ref{fig:bbob_heatmap} compares JANUS variants against their CMA-ES host under the identical $[-5,5]^d$ protocol. With the BBOB-specific fuse cadence (fuse\_every=50, tuned once for $d\in\{30,100\}$; App.~\ref{app:hparams} shows all hyperparameters stay within $1.5\times$ of default under a one-at-a-time sweep), JANUS wins on 11/16 functions at $d{=}30$ (11W/1T/4L), \textbf{15/16} at $d{=}100$ (15W/1T/0L), and \textbf{15/16} at $d{=}500$ (geometric-mean ratio $0.11$), with 0 significant regressions at $d\in\{100,500\}$ (Table~\ref{tab:bbob_dense_30_100_std}). The few host-controlled losses concentrate on functions such as $f_{22}$ and $f_{23}$, where progress depends on global basin discovery beyond the current residual model. Bonferroni-corrected paired Wilcoxon tests (App.~\ref{app:per_domain}) confirm the effect grows with dimension and is not attributable to chance. JANUS-GN is the recommended default when CMIX's inverse Hessian estimate is noisy.

\paragraph{Comparison with model-based DFO and host-agnostic validation.}
Against model-based derivative-free optimizers (NEWUOA, BOBYQA, COBYLA~\citep{powell2006newuoa,powell2009bobyqa}, DFO-LS~\citep{cartis2019dfols}) at $d{=}30$/$100$ (30 seeds; App.~\ref{app:baselines}, Table~\ref{tab:new_baselines}(a)), the rerun CMA-ES host has geometric-mean final values $0.8$/$20.1$, JANUS-GN improves these to $0.1$/$14.7$, and standalone DFO is far worse ($1351$--$2419$ at $d{=}30$, $2112$--$5477$ at $d{=}100$), leaving model construction costs behind the population-plus-local-geometry combination. To test whether the metric principle is CMA-specific, we also attach JANUS infill to a DE host (30 seeds, 16 functions, $d\in\{30,100,500\}$): JANUS-DE beats vanilla DE on $86\%$/$91\%$/$96\%$ of runs with geometric-mean ratios $0.86$/$0.58$/$0.59$ (Table~\ref{tab:new_baselines}(b)), confirming metric-guided infill is not CMA-ES-specific.

\paragraph{NN-BBO baselines (same-protocol, $d{=}500$).}
Table~\ref{tab:bbob_dense_500d} reports same-protocol comparisons against 8 NN-BBO / MetaBBO baselines: Random Search, GLHF~\citep{li2024pretrained}, SYMBOL~\citep{ICLR2024_b742b708}, LES~\citep{lange2023discovering}, RLEPSO~\citep{yin2021rlepso}, GLEET~\citep{10.1145/3638529.3653996}, LDE~\citep{sun2021lde}, SynCMA~\citep{zhang2024invariant}, and DiBO~\citep{yun2025posterior} (compact 4-baseline slice shown; full 10-baseline table in App.~\ref{app:bbob}, Table~\ref{tab:bbob_dense_500d_full}). B2Opt~\citep{Li_Wu_Zhang_Wang_2025} is omitted at $d{=}500$ because its policy network is fixed at $d{=}10$ and cannot accept 500-dimensional inputs. Full JANUS attains the best mean error on \textbf{13 of the 16} functions in the complete baseline comparison (App.~\ref{app:bbob}, Table~\ref{tab:bbob_dense_500d_full}; the compact slice below shows $9$ blue cells, as the JANUS-GN ablation column can hold the best cell and $f_{20}$ lacks baseline coverage). The geometric-mean final error is $248$ for JANUS, versus $8.6{\times}10^4$ for CMA-ES and $1.0{\times}10^4$ for the best NN-BBO baseline (LDE)---a $>40\times$ reduction over the strongest learned baseline. Among the external baselines the only per-function wins are $f_{24}$ (RLEPSO) and $f_{23}$ (SYMBOL), both narrow margins where the residual Jacobian carries little exploitable curvature.

\begin{table}[t]
\centering
\maintablesetup
\caption{UAV main comparison at 2,500 FEs ($d{=}1135$). Lower is better; full 12-method, 28-terrain table and path visualizations are in App.~\ref{app:uav}.}
\label{tab:uav_main_compact}
\begin{tabular*}{\linewidth}{@{\extracolsep{\fill}}lrrr}
\toprule
Method & Mean & Median & Rank-1 \\
\midrule
\textbf{JANUS} & \bestcell{9,600} & \bestcell{6,644} & \bestcell{15} \\
DE & \secondcell{11,006} & 9,261 & \secondcell{1} \\
JDE21 & 11,017 & \secondcell{8,579} & \secondcell{1} \\
RLDEAFL & 11,411 & 8,197 & 0 \\
CMA-ES & 14,486 & 10,908 & 0 \\
\bottomrule
\end{tabular*}
\end{table}

\begin{table}[t]
\centering
\tightmaintablesetup
\caption{BBOB $d{=}500$: mean final error, 15 held-out functions with baseline coverage ($[-5,5]^d$, 20,000 FEs; $f_{20}$ omitted, no baseline data). NN-BBO baselines and JANUS rerun under our protocol (10 seeds); JANUS-GN from the host-controlled run (30 seeds). Blue$=$best, green$=$second. Lower is better. Full 10-baseline comparison in App.~\ref{app:bbob}, Table~\ref{tab:bbob_dense_500d_full}. The \emph{Wins} row counts best (blue) cells \emph{within this reduced 7-column slice}, in which the ablation column JANUS-GN can hold the best cell; this differs from the ``13/16'' figure in the text, which counts functions where full JANUS is best among all baselines over the complete 16-function set. \emph{Sig.}: Bonferroni-corrected wins/ties/losses vs JANUS ($p<0.05$), by a paired Wilcoxon signed-rank test over the 15 per-function final errors (each function's mean over the first 10 JANUS seeds, paired against the corresponding baseline mean); the pairing unit is the function identity, so this is a cross-function comparison of aggregated final errors rather than a within-function seed test.}
\label{tab:bbob_dense_500d}
\begin{tabular*}{\textwidth}{@{\extracolsep{\fill}} c c cccc cc}
\toprule
 & \multicolumn{1}{c}{Host} & \multicolumn{4}{c}{\textbf{NN-BBO / MetaBBO baselines}} & \multicolumn{2}{c}{\textbf{Ours}} \\
\cmidrule(lr){2-2} \cmidrule(lr){3-6} \cmidrule(lr){7-8}
Func & CMA-ES & RS & SYMBOL & RLEPSO & LDE & \textbf{JANUS-GN} & \textbf{JANUS} \\
\midrule
$f_{4}$ & 4.2e4 & 3.7e5 & 1.2e5 & 2.5e4 & \secondcell{1.6e4} & \secondcell{1.6e4} & \bestcell{5992} \\
$f_{6}$ & 1.8e6 & 1.6e7 & 5.6e6 & 9.2e5 & 9.9e5 & \secondcell{7103} & \bestcell{2113} \\
$f_{7}$ & 1.5e4 & 3.9e4 & 1.8e4 & 1.2e4 & \secondcell{4824} & -- & \bestcell{988} \\
$f_{8}$ & 5.9e8 & 5.9e8 & 4.7e7 & 1.5e7 & 1.3e7 & \secondcell{3072} & \bestcell{1426} \\
$f_{9}$ & 6.1e8 & 4.1e8 & 2.4e6 & 9.5e5 & 1.8e5 & \bestcell{508.7} & \secondcell{1323} \\
$f_{10}$ & 3.3e7 & 2.8e8 & 8.9e7 & 1.8e7 & 1.1e7 & \bestcell{1799} & \secondcell{1.5e6} \\
$f_{11}$ & 1.2e4 & 5833 & 3099 & \secondcell{2715} & 3141 & \bestcell{1934} & 4037 \\
$f_{12}$ & 8.2e9 & 3.0e10 & 4.8e9 & 1.9e9 & 1.5e9 & \secondcell{1.8e6} & \bestcell{1.0e4} \\
$f_{13}$ & 1.1e4 & 1.5e4 & 8913 & 5772 & 5347 & \secondcell{4728} & \bestcell{88.6} \\
$f_{14}$ & 267 & 655 & 176 & 82.5 & \secondcell{73.0} & 1.5e5 & \bestcell{0.12} \\
$f_{18}$ & 73.9 & 140 & 70.3 & 54.7 & \secondcell{40.6} & 157.6 & \bestcell{14.5} \\
$f_{19}$ & 3006 & 2096 & 24.2 & 22.4 & 12.3 & \bestcell{0.34} & \secondcell{11.7} \\
$f_{22}$ & 84.1 & 86.4 & 81.2 & 63.6 & 62.2 & \secondcell{11.6} & \bestcell{2.87} \\
$f_{23}$ & 1.65 & 1.64 & \bestcell{1.46} & 1.65 & 1.64 & 11.1 & \secondcell{1.66} \\
$f_{24}$ & 2.3e4 & 2.1e4 & 9315 & \bestcell{7426} & 7660 & 7448 & \secondcell{7435} \\
\midrule
\textbf{Wins} & 0 & 0 & 1 & 1 & 0 & 5 & \textbf{9} \\
\textbf{Sig. vs JANUS} & 15/1/0 & 14/2/0 & -- & -- & -- & 0/16/0 & -- \\
\bottomrule
\end{tabular*}
\end{table}

\begin{figure}[t]
\centering
\includegraphics[width=0.9\textwidth]{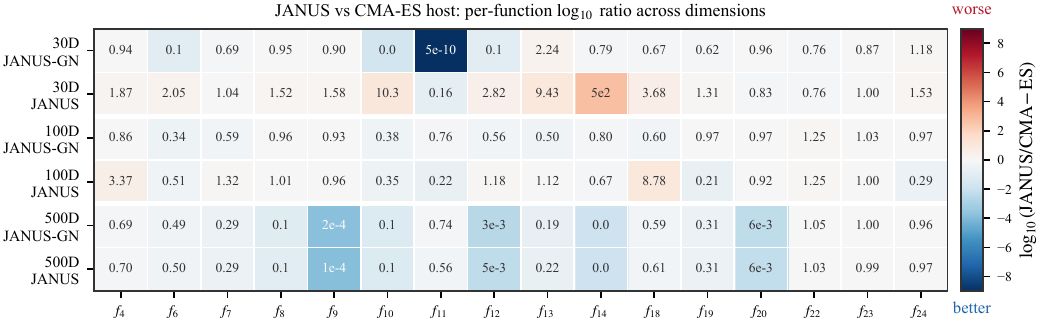}
\caption{Per-function BBOB host comparison across $d\in\{30,100,500\}$. Color encodes $\log_{10}(\text{JANUS variant}/\text{CMA-ES host})$ under the same protocol; blue$=$JANUS better, red$=$worse.}
\label{fig:bbob_heatmap}
\end{figure}

\subsection{High-Dimensional UAV Path Planning ($d{=}1135$)}
\label{sec:exp-uav}

We next test whether the same local-geometry reuse helps in a high-dimensional structured application: the UAV problem optimizes a 1135-dimensional spherical-coordinate flight path under terrain-dependent costs, and JANUS estimates residual coordinates from the observed optimization trace rather than requiring analytic derivatives. We evaluate on 28 held-out terrains with 30 runs each (840 final values per method) under a short 2,500-FE budget. Baselines span traditional optimizers (DE, PSO, RS), adaptive variants (JDE21~\citep{brest2021self}, CMA-ES, SAHLPSO~\citep{tao2021sahlpso}), and external MetaBBO methods (RLDEAFL~\citep{10.1145/3712256.3726309}, GLEET, GLHF, LES).

Table~\ref{tab:uav_main_compact} shows that JANUS achieves the best mean cost (9,600) and median (6,644), $12.8\%$ below the strongest external baseline (DE at 11,006); JDE21 has the tightest IQR among the top methods (2,795 vs JANUS's 9,579), though at a higher mean and median. The observed UAV gains are consistent with a regime in which the local residual coordinates are more informative and stable than the scalar-derived coordinates used on BBOB, so inverse Hessian exploration is more reliable under the short budget.

The terrain-level results are heterogeneous: JANUS ranks first on 15/28 terrains (best among all 12 compared methods, including JANUS-GN; App.~\ref{app:uav} reports the complementary count against external baselines only), while other methods lead the remaining terrains under the short budget, typically where early corridor discovery matters more than local refinement. UAV exposes more meaningful residual structure than scalar BBOB, but the terrain distribution spans both regimes, so App.~\ref{app:uav} reports the full per-terrain audit rather than only the aggregate mean. BO-like sanity checks (sequential ET-BO, RF-SMAC-like~\citep{hutter2011smac}, TuRBO-like~\citep{eriksson2019turbo} batched) on a terrain subset give mean costs of $13{,}900$--$14{,}600$ (App.~\ref{app:baselines}, Table~\ref{tab:new_baselines}(c)), substantially worse than JANUS ($9{,}600$), consistent with the difficulty of surrogate modeling at $d{=}1135$ with only $2{,}500$ evaluations.

\textbf{Long-budget validation.} Extending the same terrain protocol to $10{,}000$ and $20{,}000$ FEs (App.~\ref{app:uav}, Table~\ref{tab:uav_long_budget_main}), JANUS still leads at $10$k ($8{,}488$ vs CMA-ES $8{,}587$, $-1.2\%$); at $20$k the methods converge to near-parity ($8{,}019$ vs $8{,}023$). Thus JANUS is best interpreted as an early-budget accelerator — the regime of interest when evaluations cost minutes to hours — and the 20,000-FE result suggests that JANUS does not cause a persistent degradation in this UAV setting.

\begin{figure}[t]
\centering
\begin{subfigure}{0.22\textwidth}\centering
\includegraphics[width=\linewidth]{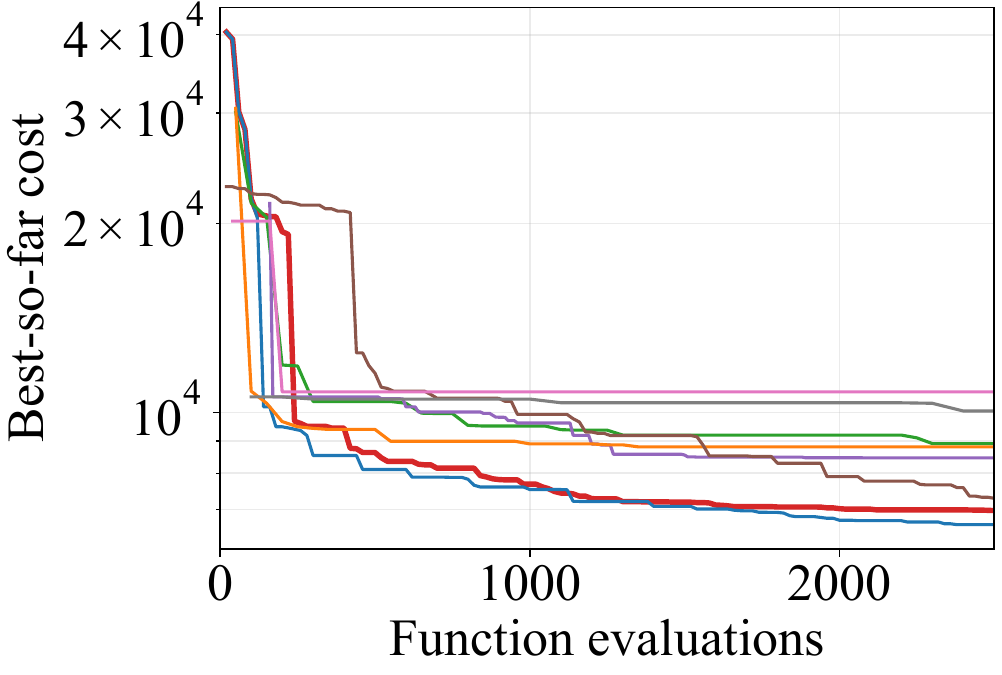}
\caption{T1 convergence}
\end{subfigure}\hfill
\begin{subfigure}{0.22\textwidth}\centering
\includegraphics[width=\linewidth]{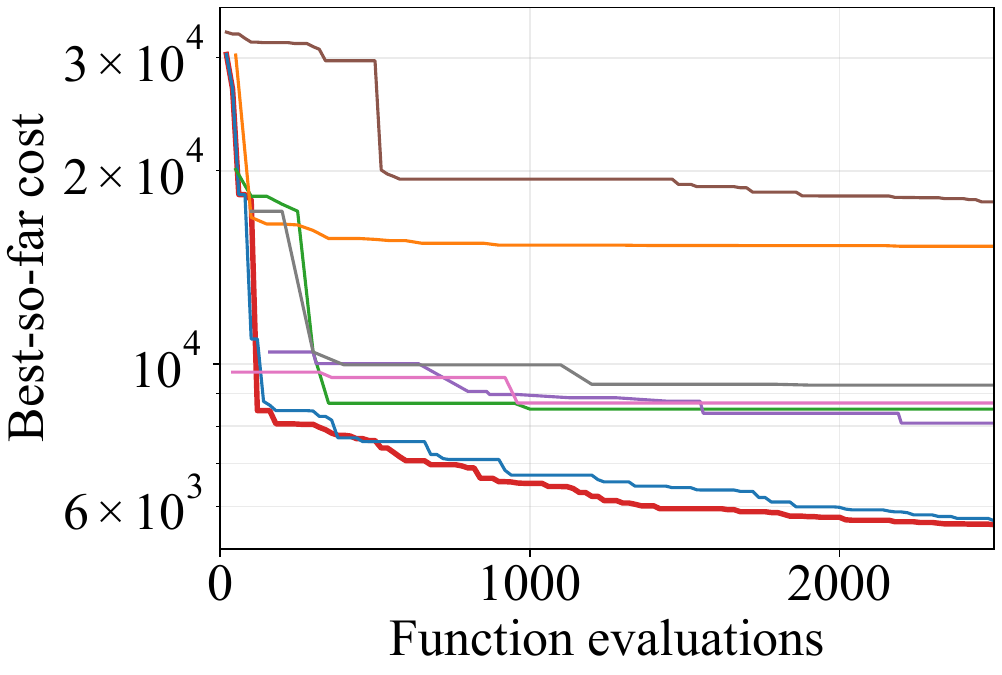}
\caption{T17 convergence}
\end{subfigure}\hfill
\begin{subfigure}{0.22\textwidth}\centering
\includegraphics[width=\linewidth]{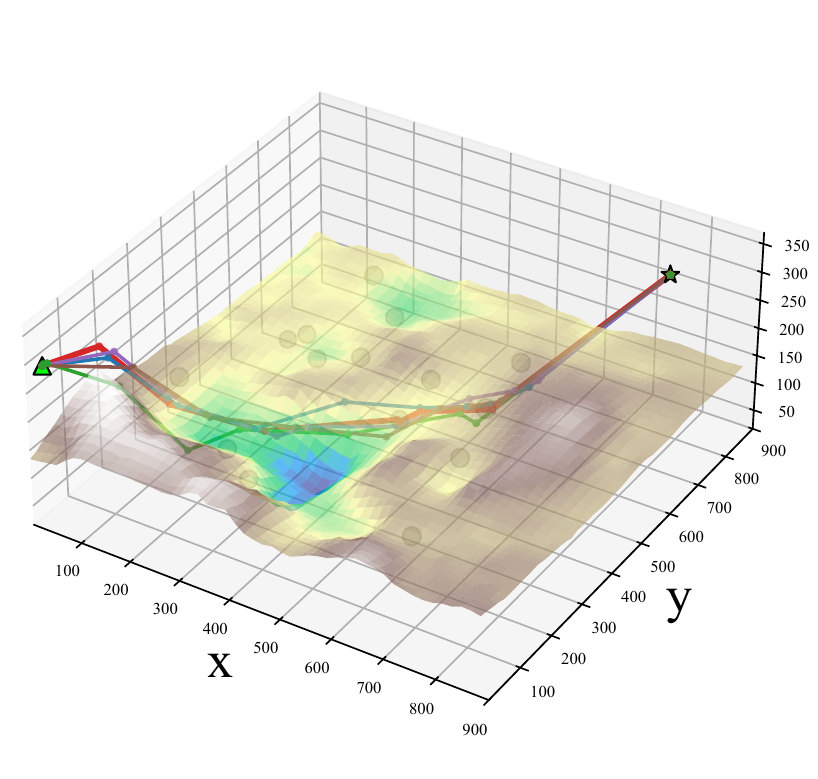}
\caption{T1 paths}
\end{subfigure}\hfill
\begin{subfigure}{0.22\textwidth}\centering
\includegraphics[width=\linewidth]{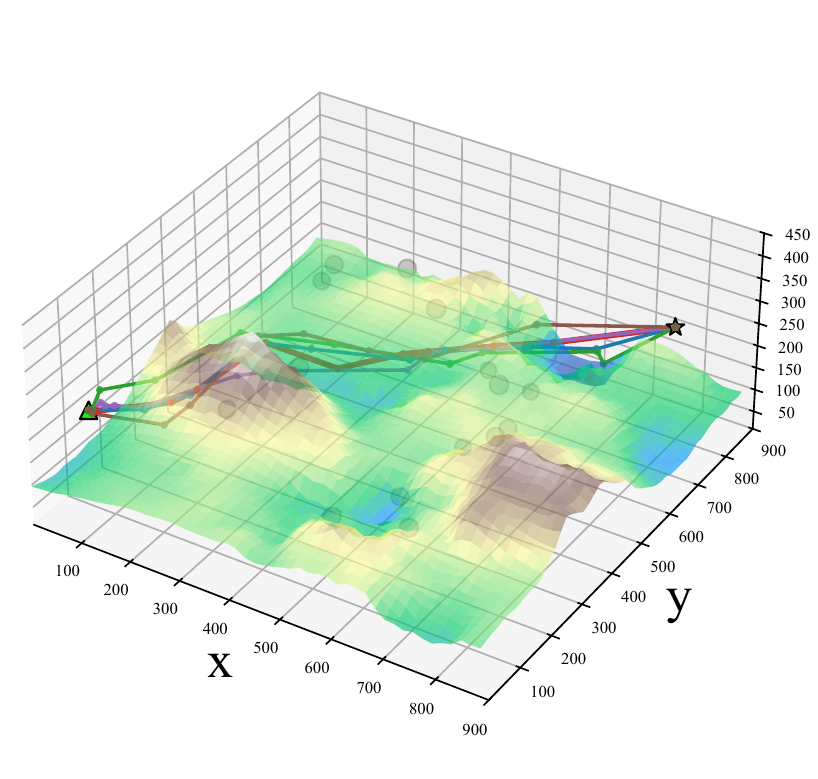}
\caption{T17 paths}
\end{subfigure}\hfill
\begin{subfigure}{0.08\textwidth}\centering
\includegraphics[width=\linewidth]{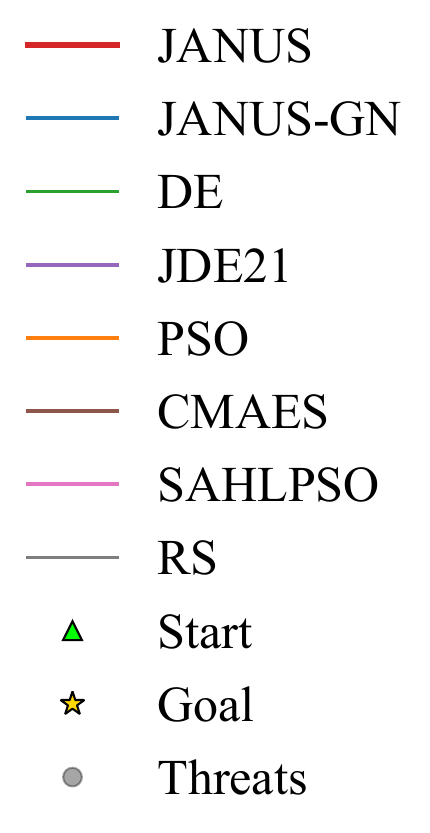}
\caption*{}
\end{subfigure}
\caption{\textbf{UAV convergence and path quality} on two representative terrains. JANUS separates early and produces smooth low-cost paths; JANUS-GN follows more conservatively, while other baselines often plateau higher under the 2,500-FE budget.}
\label{fig:uav_convergence}
\end{figure}

Figure~\ref{fig:uav_convergence} shows the mechanism behind the UAV aggregate numbers: the convergence panels show JANUS separating from CMA-ES and the traditional baselines before the full budget is exhausted, while the 3D path panels show the corresponding effect in the native trajectory space—locally informed candidates steer the host toward smoother, lower-cost corridors rather than merely improving a scalar score at the end of the run.

JANUS-GN follows the same early descent trend but more conservatively. The full JANUS variant gains more when CMIX has a stable local metric to exploit; JANUS-GN wins several individual terrains in App.~\ref{app:uav} where corridor discovery matters more than metric reshaping under the short budget.

\subsection{Multi-Objective Optimization (38 tasks)}
\label{sec:moo}

We then test whether JANUS can improve proposal generation without interfering with Pareto survival, treating each objective as a residual coordinate and generating a small fraction ($\rho{=}5\%$) of offspring from the inverse Hessian metric while leaving host survival unchanged. For the low-objective suite (18 tasks, 2--3 objectives over ZDT, DTLZ, WFG), we pair JANUS with SMS-EMOA (hypervolume-based survival); for the many-objective suite (20 tasks, 5/8/10 objectives), we pair JANUS with AGE-MOEA2 (geometry-aware survival). All experiments use 20,000 evaluations, 10 runs, and unified post-hoc hypervolume reference points.

\paragraph{Headline: 12/38 significant wins, 0 regressions, regime-dependent advantage.}
Table~\ref{tab:moo_leaderboard} summarizes the leaderboard. JANUS-SMS 5\% and JANUS-AGE2 5\% each achieve the best average rank (2.00) in their respective regime. Pairwise Mann--Whitney $U$ tests ($p<0.05$) against the strongest host baseline show 3/18 and 9/20 significant wins respectively (12/38 total; App.~\ref{app:moo} additionally reports raw best-average-rank win counts, higher because they are not gated by a significance test). No statistically significant regressions were detected under these tests and the Bonferroni correction --- a property that matters more than raw win rate here, since infill mechanisms that reshape proposal distributions risk degrading the host where geometry is unhelpful. The remaining tasks are statistically tied.

\begin{table}[t]
\centering
\tightmaintablesetup
\caption{MOO main results. 20,000 evaluations, 10 runs/task (180 low-, 200 many-objective), unified post-hoc HV reference points. Blue cell$=$best, green cell$=$second per regime. $+/{\approx}/-$: Mann--Whitney significant wins/ties/losses vs strongest host.}
\label{tab:moo_leaderboard}
\begin{tabular*}{\linewidth}{@{\extracolsep{\fill}} llrrl}
\toprule
Regime & Method & Avg.\ Rank & Norm.\ HV & $+/{\approx}/-$ \\
\midrule
Low & \textbf{JANUS-SMS 5\%} & \bestcell{2.00} & \bestcell{0.994} & 3/15/0 \\
(2--3 obj.) & JANUS-SMS 2\% & \secondcell{2.50} & \secondcell{0.993} & -- \\
 & SMS-EMOA & 3.11 & 0.972 & -- \\
 & NSGA-II & 3.61 & 0.967 & -- \\
 & MOEA/D & 5.00 & 0.926 & -- \\
 & Random & 6.50 & 0.756 & -- \\
 & CMA-ES & 8.11 & 0.711 & -- \\
\midrule
Many & \textbf{JANUS-AGE2 5\%} & \bestcell{2.00} & \secondcell{0.972} & 9/11/0 \\
(5/8/10 obj.) & JANUS-AGE2 2\% & \secondcell{2.30} & \bestcell{0.973} & -- \\
 & AGE-MOEA2 & 3.65 & 0.915 & -- \\
 & NSGA-III & 5.00 & 0.888 & -- \\
 & RVEA & 6.40 & 0.865 & -- \\
 & NSGA-II & 6.50 & 0.917 & -- \\
 & MOEA/D & 7.45 & 0.820 & -- \\
\bottomrule
\end{tabular*}
\end{table}

\paragraph{Regime-dependent gains.}
The MOO results are intentionally reported by regime rather than pooled into a single headline; the rank matrix (App.~\ref{app:moo}, Fig.~\ref{fig:moo_matrix}) localizes the advantage. JANUS-SMS is strongest on WFG/DTLZ, where a few metric-guided candidates quickly improve the hypervolume-dominated region; JANUS-AGE2 leads at 5/8 objectives, where objective-vector residuals expose useful local geometry that geometry-aware survival exploits. At 10 objectives the gap narrows for both hosts because survival is already strong, mirroring the scalar BBOB boundary: JANUS helps when candidate geometry is a bottleneck, and its effect shrinks once host survival has saturated the diversity-convergence trade-off. JANUS is thus most valuable as a default-on augmentation for hosts facing moderate-dimensional ($m\le10$) Pareto fronts, where a small infill fraction improves convergence at negligible risk. \emph{Runtime}: with CUDA-accelerated metric estimation, JANUS-SMS/AGE2 5\% are $1.46\times$/$1.22\times$ slower than SMS-EMOA/AGE-MOEA2 (App.~\ref{app:moo}, Fig.~\ref{fig:ablation_panel})—unlikely to be the bottleneck when evaluation dominates wall-clock time.

\subsection{Ablation: Two Uses of One Jacobian}
\label{sec:exp-rq4}

Finally, we isolate the two uses of $J$ via a component ablation over representative scalar, UAV, and MOO settings (Table~\ref{tab:rq4_strict}). GN improves early exploitation, CMIX is strongest when curvature is stable enough to guide exploration, and full JANUS gives the best scalar result while remaining competitive on structured and multi-objective tasks.

\begin{table}[t]
\centering
\maintablesetup
\setlength{\tabcolsep}{3.0pt}
\caption{Component ablation across representative domains. Ratios are relative to the base host; lower is better for BBOB/UAV and higher is better for MOO.}
\label{tab:rq4_strict}
\begin{tabular*}{\linewidth}{@{\extracolsep{\fill}} lccrrr}
\toprule
Variant & GN & CMIX & BBOB $\downarrow$ & UAV $\downarrow$ & MOO $\uparrow$ \\
\midrule
Base host &  &  & 1.00$\times$ & 1.00$\times$ & 1.00$\times$ \\
GN only & \cmark &  & \secondcell{0.85$\times$} & 0.91$\times$ & \bestcell{1.02$\times$} \\
CMIX only &  & \cmark & 0.92$\times$ & \bestcell{0.88$\times$} & 0.97$\times$ \\
\textbf{JANUS} & \cmark & \cmark & \bestcell{0.83$\times$} & \secondcell{0.89$\times$} & \secondcell{1.01$\times$} \\
\bottomrule
\end{tabular*}
\end{table}

\emph{Practical recommendation}: use JANUS (CMIX+GN, tuned fuse cadence) as the scalar default; fall back to JANUS-GN when the trace is noisy, curvature unstable, or the objective multimodal.

\subsection{Implementation and Protocol Controls}
\label{sec:implementation}

JANUS is a candidate generator around the host ask/tell interface. At each generation the host proposes its ordinary offspring; JANUS reads the most recent evaluated window, fits the residual chart, and contributes a few additional candidates, all evaluated before the host's standard tell/update step. It never overwrites the host covariance, restart logic, archive, or survival operator, so the implementation is portable across CMA-ES, DE, SMS-EMOA, and AGE-MOEA2-style loops. In CMA-style hosts we clip the proposed displacement in host-whitened coordinates to $\|\cdot\|\le\sqrt d$; DE and MOO hosts use analogous box and step-length constraints, keeping candidates on the host sampling scale and preventing an ill-conditioned Jacobian from dominating adaptation or survival. The chart is rebuilt more often for scalar objectives, where pseudo-residuals are less stable than observed components; for UAV/MOO, coordinates are observed directly and CMIX is used more aggressively (App.~\ref{app:hparams}). \paragraph{Operating regime.}
JANUS helps most when the recent trace holds a stable local metric the host does not already encode: the effect grows with dimension on BBOB, transfers to a proprietary 1135D ISP task ($22\%$ error reduction vs.\ CMA-ES at 5,000 evaluations, App.~\ref{app:awb}), and is strongest on UAV, where the objective decomposition yields meaningful residual coordinates. The same evidence marks its limits: JANUS-GN is safer on multimodal functions, short UAV budgets can leave curvature unstable before CMIX has enough trace, and in MOO the gain is largest before host survival saturates diversity (App.~\ref{app:negative}). All comparisons share seeds, budgets, initialization, and protocol (App.~\ref{app:protocols}).

\section{Conclusion}
\label{sec:conclusion}

Population optimizers discard the local geometry already in their evaluation trace; JANUS recovers it through one online residual Jacobian, fed back as bounded infill without touching host selection, covariance adaptation, or step-size control. With hyperparameters shared across tasks (bar the BBOB fuse cadence; App.~\ref{app:hparams}), the same mechanism improves CMA-ES on 11--15/16 BBOB functions, attains the best mean UAV cost, and yields 12/38 significant MOO wins with no significant regressions. Under the evaluated protocols this pairing outperformed either standalone variant.

\FloatBarrier
\clearpage
\bibliographystyle{iclr2027_conference}
\bibliography{references}

\clearpage
\begin{center}
{\LARGE\bfseries JANUS: Online Jacobian-Aligned Infill\par}
{\large Supplementary Material\par}
\end{center}
\vspace{0.5em}
\suppressfloats[t]
\appendix
\section{Hyperparameters and Reproducibility}
\label{app:hparams}

Table~\ref{tab:hparams} lists the JANUS hyperparameters held fixed in all experiments. The adaptive mixing weight $w$ (Eq.~\ref{eq:weight}) is derived only from problem dimensions. One exception is the BBOB fuse cadence: fuse\_every is tuned once for the $d\in\{30,100\}$ scalar setting (fuse\_every=50) rather than reused verbatim at $d=500$; every other hyperparameter in Table~\ref{tab:hparams} is shared unchanged across every experiment in this paper.

For completeness, the protocols used by the main paper are: BBOB uses 16 held-out functions, $d\in\{30,100,500\}$, 20,000 FEs, and 30 runs; UAV uses 28 held-out terrains, 2,500 FEs, and 30 runs; MOO uses pymoo hosts, 20,000 evaluations, 10 runs, and unified post-hoc hypervolume reference points.

\subsection{Search Spaces and Evaluation Protocols}
\label{app:protocols}

\paragraph{BBOB.} We use a MetaBox/COCO-compatible protocol: 16 held-out functions, $d\in\{30,100,500\}$, $[-5,5]^d$, 20,000 FEs, and 30 runs. The CMA-ES host comparison in Fig.~\ref{fig:bbob_heatmap} matches functions, dimensions, seeds, population size, initialization, box constraints, and budget. Non-ABOM baselines use this setting whenever supported. ABOM is treated only as a reported reference because its paper states a wider box and does not provide public source code.

\paragraph{UAV path planning.} UAV uses a 1135-dimensional spherical-coordinate path parameterization decoded to Cartesian trajectories before evaluation. We use 28 held-out terrains, 30 runs, and a 2,500-FE budget; the long-budget check repeats the protocol at 10,000 and 20,000 FEs. Table~\ref{tab:dense_uav_terrain_panel} keeps per-terrain results because the terrain distribution is heterogeneous.

\paragraph{Multi-objective optimization.} MOO experiments use pymoo tasks and host survival operators: SMS-EMOA-style selection for low-objective tasks and AGE-MOEA2~\citep{panichella2022agemoea2}-style survival for many-objective tasks. Each task uses 20,000 evaluations and 10 runs. Hypervolume uses unified post-hoc reference points, and JANUS uses the objective vector as the residual coordinate system.

\begin{table}[H]
\centering
\small
\renewcommand{\arraystretch}{1.15}
\caption{JANUS hyperparameters shared across all experiments. The sole per-setting exception is the BBOB fuse cadence (\emph{assembly period}), tuned once for the scalar BBOB setting ($d\in\{30,100\}$) and reused verbatim elsewhere.}
\label{tab:hparams}
\begin{tabularx}{\textwidth}{l l X}
\toprule
Symbol & Value & Meaning \\
\midrule
window & 300 & Sliding window of recent $(x,s)$ pairs for Jacobian regression \\
$\mu$ & $10^{-3}$ & Ridge regularization for residual-gradient regression \\
$\lambda$ & $[1,10]$ & Levenberg--Marquardt damping (stable across range) \\
$\kappa$ & 1.0 & Bounded-injection clip factor \\
$w_{\text{base}}$ & 0.5 & Base mixing weight before dimension scaling \\
$[w_{\min},w_{\max}]$ & $[0.02,0.6]$ & Clip range for the adaptive mixing weight \\
assembly period & 4--6 gen$^\dagger$ & Generations between Jacobian re-estimation ($^\dagger$50 for the scalar BBOB setting) \\
$\rho$ (MOO) & 0.05 & Fraction of offspring allocated to JANUS infill \\
\bottomrule
\end{tabularx}
\end{table}

\paragraph{Sensitivity sweep.} We vary each hyperparameter in Table~\ref{tab:hparams} one at a time (window size, damping $\lambda$, CMIX damping $\lambda_{\mathrm{mix}}$, infill ratio $\rho$, and the mixing-weight bounds $[w_{\min},w_{\max}]$) across 18 configurations on the $d{=}100$ BBOB suite. Most configurations stay within $1.5\times$ of the default geometric-mean final error, and no single setting collapses the method; the largest sensitivity is to the window size, since a window that is too short destabilizes the ridge-regression Jacobian estimate and a window that is too long mixes stale curvature into $J$. This supports using one fixed hyperparameter set across BBOB, UAV, and MOO rather than tuning per benchmark.
\FloatBarrier

\subsection{Notation and Variants}
\label{app:notation}
See Table~\ref{tab:notation} for notation and Table~\ref{tab:janus_variants} for implementation variants.
\begin{table}[H]
\centering
\wideappendixtablesetup
\caption{Notation and JANUS instantiations.}
\label{tab:notation}
\begin{tabular*}{\linewidth}{@{\extracolsep{\fill}} p{0.14\linewidth} p{0.52\linewidth} p{0.22\linewidth}}
\toprule
\multicolumn{3}{l}{\textbf{Notation}} \\
\midrule
Symbol & Meaning & Where used \\
\midrule
$x\in\mathcal{X}$ & Candidate solution in the native search space & All benchmarks \\
$f(x)$ & Scalar black-box objective or scalarized cost & BBOB/UAV \\
$r(x)$ & Observed component/objective vector when available & Structured/MOO tasks \\
$s(x)$ & Residual coordinates used for local Jacobian fitting & All \\
$\mathcal{W}$ & Recent evaluated window for local fitting & Online adaptation \\
$J$ & Local residual Jacobian from current window & Metric construction \\
$G=J^\top J$ & Normal metric reused for exploitation and exploration & JANUS infill \\
$\lambda$ & Damping/regularization & Numerical stability \\
$\rho$ & Fraction of candidates allocated to CMIX & Exploration budget \\
\midrule
\multicolumn{3}{l}{\textbf{Instantiations}} \\
\midrule
Name & Mechanism & Role in experiments \\
\midrule
JANUS-GN & Bounded damped Gauss--Newton infill from $G$ & Exploitation-only mode \\
JANUS & JANUS-GN plus inverse-curvature candidate mixing (CMIX) & Full scalar/UAV mode \\
JANUS-SMS & JANUS infill attached to SMS-EMOA survival & Low-objective MOO host \\
JANUS-AGE2 & JANUS infill attached to AGE-MOEA2 survival & Many-objective MOO host \\
JANUS-DE & JANUS infill attached to differential evolution & Non-CMA host check \\
Random infill & Same infill budget without $G$ & Metric-free control \\
\bottomrule
\end{tabular*}
\vspace{2pt}
\parbox{\linewidth}{\emph{Default host.} Scalar BBOB and UAV use CMA-ES unless otherwise stated; JANUS-GN and JANUS differ only by whether CMIX candidates are enabled. MOO variants use the host named in the method label.}
\label{tab:janus_variants}
\end{table}

\FloatBarrier

\subsection{Extended Baseline Comparisons}
\label{app:baselines}

Table~\ref{tab:new_baselines} (referenced from the main paper, Sec.~\ref{sec:exp-bbob} and Sec.~\ref{sec:exp-uav}) adds three checks: classical local modeling on BBOB, JANUS-DE as a non-CMA host check, and BO-style surrogate modeling on 1135-dimensional UAV.

\begin{table}[t]
\centering
\footnotesize\renewcommand{\arraystretch}{1.0}\setlength{\tabcolsep}{4pt}
\setlength{\abovecaptionskip}{2pt}
\setlength{\belowcaptionskip}{2pt}
\caption{Extended baseline checks (Sec.~\ref{sec:exp-bbob}). Lower is better except win rate. Blue cell$=$best, green cell$=$second.}
\label{tab:new_baselines}

\textbf{(a) BBOB DFO comparison, geometric-mean final value.}\par\vspace{2pt}
\begin{tabular*}{\linewidth}{@{\extracolsep{\fill}} lrr}
\toprule
Method & $d{=}30$ & $d{=}100$ \\
\midrule
\textbf{JANUS-GN} & \bestcell{0.1} & \bestcell{14.7} \\
CMA-ES host & \secondcell{0.8} & \secondcell{20.1} \\
BOBYQA & 1351 & 2112 \\
COBYLA & 1447 & 2524 \\
DFO-LS & 1864 & 2641 \\
NEWUOA & 2419 & 5477 \\
\bottomrule
\end{tabular*}

\vspace{5pt}
\textbf{(b) JANUS-DE vs.\ vanilla DE, host-agnostic check.}\par\vspace{2pt}
\begin{tabular*}{\linewidth}{@{\extracolsep{\fill}} lrrr}
\toprule
Metric & $d{=}30$ & $d{=}100$ & $d{=}500$ \\
\midrule
\textbf{JANUS-DE win rate} & 86\% & 91\% & 96\% \\
\textbf{JANUS-DE geo ratio} & 0.86 & 0.58 & 0.59 \\
\bottomrule
\end{tabular*}

\vspace{5pt}
\textbf{(c) UAV BO-like sanity check, $d{=}1135$, 2,500 FEs, 3 terrains.}\par\vspace{2pt}
\begin{tabular*}{\linewidth}{@{\extracolsep{\fill}} lr}
\toprule
Method & Mean cost \\
\midrule
\textbf{JANUS} & \bestcell{9,600} \\
ET-BO & \secondcell{13,923} \\
RF-SMAC & 14,022 \\
TuRBO & 14,634 \\
Random & 17,701 \\
\bottomrule
\end{tabular*}
\end{table}

\paragraph{Restricted-covariance baselines (LM-CMA, VkD-CMA).}
\label{app:restricted-cov}
Table~\ref{tab:restricted_cov_baselines} isolates the structural prediction of Prop.~\ref{prop:restricted} by comparing two restricted-covariance hosts---LM-CMA (LPCMA)~\citep{loshchilov2014lm} and VkD-CMA (VLPCMA)~\citep{akimoto2014vkd}---against a full-rank CMA-ES host on the 16 held-out BBOB functions at $d\in\{30,100\}$, 10 seeds, 5,000 FEs, all under this single shared protocol. It reports geometric-mean final \emph{error} (value minus the known optimum) at 5,000 FEs over 10 seeds, and is therefore separate from the DFO panel of Table~\ref{tab:new_baselines}, which reports geometric-mean final \emph{value} at 20,000 FEs over 30 seeds. LPCMA and VLPCMA are the canonical restricted-covariance baselines for Prop.~\ref{prop:restricted}: both adapt covariance in a fixed low-dimensional basis, so the proposition predicts they should underperform a full-rank host whenever the residual Jacobian carries curvature outside that basis. Empirically, LPCMA achieves geometric-mean $67.4$ ($d{=}30$) and $4.98{\times}10^3$ ($d{=}100$) and VLPCMA achieves $2.33{\times}10^3$ and $1.96{\times}10^4$; both are worse than the full-rank CMA-ES host ($14.8$ and $332$) at every dimension, and the gap widens from $4.6\times$ at 30D to $15.0\times$ at 100D for LPCMA. The observed ordering is compatible with the structural limitation described by Prop.~\ref{prop:restricted} --- restricted-covariance hosts cannot express an arbitrary residual-aligned metric --- although the proposition addresses expressiveness and does not by itself predict optimization performance, the direction of the gap, or its growth with dimension; those remain empirical observations. (JANUS's own advantage over a full-rank host is a separate, convergence-speed claim, quantified on real 30-seed data in the DFO panel of Table~\ref{tab:new_baselines}, not here.)

\begin{table}[H]
\centering
\wideappendixtablesetup
\caption{BBOB restricted- vs.\ full-rank host comparison: 16 held-out functions, 10 seeds, 5,000 FEs, $[-5,5]^d$. Geometric-mean final error across 16 functions, and per-function wins (count of functions on which each method attains the lowest error). Both LPCMA (LM-CMA) and VLPCMA (VkD-CMA), which adapt covariance in a fixed low-dimensional basis, underperform the full-rank CMA-ES host at $d\in\{30,100\}$. This is compatible with the structural limitation of Prop.~\ref{prop:restricted} --- restricted-covariance hosts cannot express an arbitrary residual-aligned metric --- though the proposition concerns expressiveness and does not by itself predict this performance ordering. The full-rank CMA-ES host is $4.6\times$ better than LM-CMA at 30D and $15.0\times$ better at 100D. Blue cell$=$best, green cell$=$second-best.}
\label{tab:restricted_cov_baselines}
\begin{tabular*}{\linewidth}{@{\extracolsep{\fill}} lcccc}
\toprule
 & \multicolumn{2}{c}{$d{=}30$} & \multicolumn{2}{c}{$d{=}100$} \\
\cmidrule(lr){2-3} \cmidrule(lr){4-5}
Method & Geo-Mean & Wins & Geo-Mean & Wins \\
\midrule
CMA-ES (host, full-rank) & \bestcell{14.8} & \bestcell{16} & \bestcell{332} & \bestcell{16} \\
LM-CMA (LPCMA) & \secondcell{67.4} & 0 & \secondcell{4.98e3} & 0 \\
VkD-CMA (VLPCMA) & 2.33e3 & 0 & 1.96e4 & 0 \\
Random Search & 3.63e3 & 0 & 1.90e4 & 0 \\
\bottomrule
\end{tabular*}
\end{table}

\noindent\emph{Readout.} Classical DFO builds useful local scalar models, but these standalone methods spend many evaluations maintaining local interpolation or trust-region models and do not scale well under the tested BBOB budgets. This does not mean that local modeling is unhelpful; rather, JANUS uses local geometry as a bounded infill signal on top of a population host instead of replacing the host. JANUS-DE further shows that this signal is not specific to CMA-ES adaptation. The UAV BO-like sanity check is included for the same reason: at $d{=}1135$ and 2,500 evaluations, generic surrogates struggle to learn a useful global model, whereas JANUS only asks for a local metric from the recent trace.
\FloatBarrier

\section{Per-Domain Evidence}
\label{app:per_domain}

\subsection{BBOB Per-Function Results}
\label{app:bbob}

\paragraph{Takeaway.} BBOB is strong in aggregate, but the per-function tables are important because scalar BBOB is the least favorable setting for JANUS: the benchmark exposes only scalar values, so residual coordinates must be estimated from the recent trace. Table~\ref{tab:dense_bbob_function_strip} gives the compact per-function ratio view. The results therefore test whether pseudo-residual geometry is sufficient to improve a population optimizer, and where it becomes unreliable.

\begin{table}[H]
\centering
\scriptsize
\renewcommand{\arraystretch}{1.02}
\caption{Dense BBOB function strip. Entries are $\log_{10}(\mathrm{JANUS}/\mathrm{ABOM})$ final-value ratios; negative cells favor JANUS.}
\label{tab:dense_bbob_function_strip}
\begin{tabular*}{\linewidth}{@{\extracolsep{\fill}} lrrrrrl}
\toprule
Function & 30D & 100D & 500D & Wins & Mean log-ratio & Verdict \\
\midrule
F4 & \cellcolor{janusgreen!35}-0.75 & \cellcolor{janusgreen!35}-1.08 & \cellcolor{janusgreen!35}-0.23 & 3/3 & -0.69 & \cellcolor{janusgreen!55}robust win \\
F6 & \cellcolor{janusgreen!35}-9.39 & \cellcolor{janusgreen!35}-4.54 & \cellcolor{janusgreen!35}-0.30 & 3/3 & -4.74 & \cellcolor{janusgreen!55}robust win \\
F7 & \cellcolor{janusgreen!35}-1.84 & \cellcolor{janusgreen!35}-2.11 & \cellcolor{janusgreen!35}-0.67 & 3/3 & -1.54 & \cellcolor{janusgreen!55}robust win \\
F8 & \cellcolor{janusgreen!35}-1.53 & \cellcolor{janusgreen!35}-1.57 & \cellcolor{janusgreen!35}-1.66 & 3/3 & -1.59 & \cellcolor{janusgreen!55}robust win \\
F9 & \cellcolor{janusgreen!35}-3.25 & \cellcolor{janusgreen!35}-3.05 & \cellcolor{janusgreen!35}-2.04 & 3/3 & -2.78 & \cellcolor{janusgreen!55}robust win \\
F10 & \cellcolor{janusgreen!35}-3.30 & \cellcolor{janusgreen!35}-3.48 & \cellcolor{janusgreen!35}-0.52 & 3/3 & -2.43 & \cellcolor{janusgreen!55}robust win \\
F11 & \cellcolor{janusgreen!35}-3.59 & \cellcolor{janusgreen!35}-3.11 & \cellcolor{janusgreen!35}-0.18 & 3/3 & -2.29 & \cellcolor{janusgreen!55}robust win \\
F12 & \cellcolor{janusgreen!35}-7.41 & \cellcolor{janusgreen!35}-9.37 & \cellcolor{janusgreen!35}-2.06 & 3/3 & -6.28 & \cellcolor{janusgreen!55}robust win \\
F13 & \cellcolor{janusgreen!35}-1.40 & \cellcolor{janusgreen!35}-2.37 & \cellcolor{janusgreen!35}-0.82 & 3/3 & -1.53 & \cellcolor{janusgreen!55}robust win \\
F14 & \cellcolor{janusgreen!35}-2.39 & \cellcolor{janusgreen!35}-5.17 & \cellcolor{janusgreen!35}-1.60 & 3/3 & -3.05 & \cellcolor{janusgreen!55}robust win \\
F18 & \cellcolor{janusgreen!35}-3.28 & \cellcolor{janusgreen!35}-1.65 & \cellcolor{janusgreen!35}-0.34 & 3/3 & -1.75 & \cellcolor{janusgreen!55}robust win \\
F19 & \cellcolor{red!12}1.17 & \cellcolor{red!12}0.80 & \cellcolor{janusgreen!35}-0.20 & 1/3 & 0.59 & \cellcolor{janusamber!45}hard case \\
F20 & \cellcolor{red!12}ref<0 & \cellcolor{red!12}ref<0 & \cellcolor{janusgreen!35}-1.84 & 1/3 & -1.84 & \cellcolor{janusamber!45}hard case \\
F22 & \cellcolor{janusgreen!35}-0.93 & \cellcolor{janusgreen!35}-1.12 & \cellcolor{red!12}0.34 & 2/3 & -0.57 & \cellcolor{janusgreen!25}mostly win \\
F23 & \cellcolor{red!12}1.02 & \cellcolor{red!12}0.02 & \cellcolor{janusgreen!35}-0.00 & 1/3 & 0.35 & \cellcolor{janusamber!45}hard case \\
F24 & \cellcolor{janusgreen!35}-0.15 & \cellcolor{janusgreen!35}-0.59 & \cellcolor{janusgreen!35}-0.03 & 3/3 & -0.26 & \cellcolor{janusgreen!55}robust win \\
\bottomrule
\end{tabular*}
\end{table}

\noindent\emph{Readout.} Negative cells favor JANUS; red/amber cells mark the locality boundary. The full table below keeps the per-function mean$\pm$std audit at $d=30$ and $d=100$, because the variance matters for distinguishing systematic gains from unstable runs. With the tuned fuse cadence, JANUS (CMIX+GN on pyCMA host) wins 11/16 at $d{=}30$ and 15/16 at $d{=}100$; JANUS-GN provides a more conservative fallback that is competitive on per-function wins but has higher overall mean due to instability on a few ill-conditioned functions.
\begin{table}[H]
\centering
\wideappendixtablesetup
\setlength{\abovecaptionskip}{2pt}
\setlength{\belowcaptionskip}{2pt}
\caption{BBOB $d=30$ and $d=100$: mean$\pm$std; lower is better. 30D JANUS uses tuned configuration (fuse\_every=50, 10 seeds); JANUS-GN 30D and all 100D from the host-controlled run (30 seeds). Blue cell$=$best, green cell$=$second-best. \textbf{W/T/L}: wins/ties/losses vs CMA-ES per function (mean comparison).}
\label{tab:bbob_dense_30_100_std}
\resizebox{\linewidth}{!}{%
\begin{tabular*}{\linewidth}{@{\extracolsep{\fill}} cccc ccc}
\toprule
& \multicolumn{3}{c}{$d=30$} & \multicolumn{3}{c}{$d=100$} \\
\cmidrule(lr){2-4}\cmidrule(lr){5-7}
Func & \textbf{JANUS-GN} & \textbf{JANUS} & CMA-ES & \textbf{JANUS-GN} & \textbf{JANUS} & CMA-ES \\
\midrule
$f_4$ & \bestcell{\makecell{35.12\\ $\pm$7.39}} & \makecell{70.01\\ $\pm$14.79} & \secondcell{\makecell{66.59\\ $\pm$13.86}} & \bestcell{\makecell{979\\ $\pm$78.20}} & \secondcell{\makecell{1014\\ $\pm$74.55}} & \makecell{2685\\ $\pm$689} \\
$f_6$ & \makecell{2.44e-07\\ $\pm$3.8e-08} & \bestcell{\makecell{2.99e-08\\ $\pm$3.7e-08}} & \secondcell{\makecell{5.11e-08\\ $\pm$7.4e-08}} & \bestcell{\makecell{3140\\ $\pm$659}} & \secondcell{\makecell{3416\\ $\pm$1031}} & \makecell{1.80e+04\\ $\pm$2845} \\
$f_7$ & \bestcell{\makecell{0.276\\ $\pm$0.53}} & \secondcell{\makecell{8.36\\ $\pm$2.84}} & \makecell{9.82\\ $\pm$8.33} & \bestcell{\makecell{229\\ $\pm$84.01}} & \secondcell{\makecell{301\\ $\pm$95.92}} & \makecell{5101\\ $\pm$1841} \\
$f_8$ & \bestcell{\makecell{3.15\\ $\pm$2.26}} & \secondcell{\makecell{13.13\\ $\pm$1.60}} & \makecell{16.75\\ $\pm$12.93} & \secondcell{\makecell{2101\\ $\pm$3846}} & \bestcell{\makecell{1922\\ $\pm$3465}} & \makecell{3075\\ $\pm$4851} \\
$f_9$ & \bestcell{\makecell{1.71\\ $\pm$1.31}} & \secondcell{\makecell{11.50\\ $\pm$1.70}} & \makecell{11.69\\ $\pm$1.27} & \secondcell{\makecell{792\\ $\pm$1751}} & \bestcell{\makecell{670\\ $\pm$1817}} & \makecell{9.69e+04\\ $\pm$2.8e+05} \\
$f_{10}$ & \makecell{2.62e+04\\ $\pm$8.9e+04} & \secondcell{\makecell{134\\ $\pm$107}} & \bestcell{\makecell{122\\ $\pm$123}} & \bestcell{\makecell{2.06e+07\\ $\pm$4.3e+06}} & \secondcell{\makecell{3.28e+07\\ $\pm$7.0e+06}} & \makecell{1.08e+08\\ $\pm$2.3e+07} \\
$f_{11}$ & \bestcell{\makecell{9.18e-06\\ $\pm$1.1e-05}} & \secondcell{\makecell{38.79\\ $\pm$52.69}} & \makecell{60.02\\ $\pm$87.47} & \bestcell{\makecell{2.09e+05\\ $\pm$2.2e+04}} & \secondcell{\makecell{2.41e+05\\ $\pm$2.8e+04}} & \makecell{1.04e+06\\ $\pm$8.6e+05} \\
$f_{12}$ & \bestcell{\makecell{0.025\\ $\pm$0.044}} & \makecell{0.543\\ $\pm$1.08} & \secondcell{\makecell{0.0273\\ $\pm$0.075}} & \bestcell{\makecell{1.35e+05\\ $\pm$1.2e+05}} & \secondcell{\makecell{4.35e+05\\ $\pm$8.1e+05}} & \makecell{4.59e+08\\ $\pm$3.0e+08} \\
$f_{13}$ & \bestcell{\makecell{7.82e-03\\ $\pm$2.6e-03}} & \makecell{0.602\\ $\pm$0.85} & \secondcell{\makecell{0.498\\ $\pm$0.48}} & \bestcell{\makecell{6.66\\ $\pm$2.41}} & \secondcell{\makecell{7.09\\ $\pm$2.23}} & \makecell{105\\ $\pm$29.86} \\
$f_{14}$ & \makecell{2.35e-06\\ $\pm$4.7e-07} & \bestcell{\makecell{4.70e-07\\ $\pm$2.5e-07}} & \secondcell{\makecell{4.75e-07\\ $\pm$2.0e-07}} & \bestcell{\makecell{150\\ $\pm$39.76}} & \secondcell{\makecell{165\\ $\pm$42.95}} & \makecell{1376\\ $\pm$443} \\
$f_{18}$ & \makecell{1.33\\ $\pm$2.36} & \bestcell{\makecell{0.272\\ $\pm$0.35}} & \secondcell{\makecell{0.391\\ $\pm$0.24}} & \bestcell{\makecell{87.35\\ $\pm$32.66}} & \secondcell{\makecell{91.36\\ $\pm$28.72}} & \makecell{213\\ $\pm$45.14} \\
$f_{19}$ & \bestcell{\makecell{0.559\\ $\pm$0.85}} & \secondcell{\makecell{3.09\\ $\pm$1.74}} & \makecell{3.41\\ $\pm$2.01} & \bestcell{\makecell{7.86\\ $\pm$0.36}} & \secondcell{\makecell{7.93\\ $\pm$0.27}} & \makecell{19.63\\ $\pm$21.75} \\
$f_{20}$ & \bestcell{\makecell{-37.97\\ $\pm$4.61}} & \secondcell{\makecell{1.86\\ $\pm$0.2}} & \makecell{1.87\\ $\pm$0.19} & \secondcell{\makecell{-32.33\\ $\pm$2.58}} & \bestcell{\makecell{-33.10\\ $\pm$2.37}} & \makecell{-2.83\\ $\pm$1.20} \\
$f_{22}$ & \makecell{86.57} & \makecell{86.57} & \makecell{86.57} & \makecell{86.57} & \makecell{86.57} & \makecell{86.57} \\
$f_{23}$ & \bestcell{\makecell{2.93\\ $\pm$0.4}} & \secondcell{\makecell{3.13\\ $\pm$0.33}} & \makecell{3.23\\ $\pm$0.5} & \bestcell{\makecell{4.84\\ $\pm$0.34}} & \secondcell{\makecell{4.88\\ $\pm$0.34}} & \makecell{7.30\\ $\pm$0.57} \\
$f_24$ & \bestcell{\makecell{41.77\\ $\pm$8.40}} & \secondcell{\makecell{119\\ $\pm$79.99}} & \makecell{135\\ $\pm$65.53} & \secondcell{\makecell{888\\ $\pm$77.19}} & \bestcell{\makecell{882\\ $\pm$85.20}} & \makecell{1055\\ $\pm$27.55} \\
\midrule
\textbf{W/T/L vs CMA-ES} & 11/1/4 & 11/1/4 & -- & 15/1/0 & 15/1/0 & -- \\
\bottomrule
\end{tabular*}
}
\end{table}

\noindent\emph{Readout.} The dense tables match the strip plot. JANUS variants dominate well-conditioned and weakly multimodal functions, while losses concentrate on $f_{19}$/$f_{23}$-like landscapes where basin discovery or weak residual structure matters more than local conditioning. The host-controlled comparison shows the same locality boundary from a cleaner angle: at 500D, JANUS is behind or tied with the rerun CMA-ES host on $f_{22}$/$f_{23}$.

With the tuned fuse cadence, low-dimensional CMIX on the pyCMA host is competitive with JANUS-GN on per-function wins (11/16 at 30D). Without tuning, CMIX can overweight a noisy inverse Hessian estimate before the recent trace defines stable local directions; GN-only injection is then the safer fallback. ABOM uses a wider BBOB box and has no public source code, so ABOM ratios are reported-reference comparisons under an explicitly stated JANUS protocol, not same-code reproduction claims. The qualitative lesson is still useful: a local Jacobian can describe the current basin, but it cannot point to a disconnected one. These cases motivate restarts, host-level diversity, or conservative GN-only injection instead of simply increasing CMIX.

\noindent\emph{Practical implication.} The BBOB appendix supports JANUS (CMIX+GN, tuned fuse cadence) as the scalar default across $d\in\{30,100,500\}$; fall back to JANUS-GN when the trace is noisy, low-dimensional with unstable curvature, or visibly multimodal. This is also why we report both JANUS-GN and JANUS throughout the scalar tables rather than presenting only the full method.

\paragraph{Full $d{=}500$ NN-BBO/MetaBBO comparison.} The main paper's Table~\ref{tab:bbob_dense_500d} reports a compact 4-baseline slice (RS, SYMBOL, RLEPSO, LDE) of the same-protocol $d{=}500$ comparison for space. Table~\ref{tab:bbob_dense_500d_full} gives the full 10-baseline comparison, including GLHF, LES, GLEET, SynCMA, and DiBO. We use ``NN-BBO/MetaBBO'' as an umbrella label for these learned or configured optimizers, but they differ in mechanism: Random Search is a non-learning reference; RLEPSO and GLEET are reinforcement-learning-trained policies; LDE and GLHF are meta-trained/learned update rules; SYMBOL is a symbolically learned optimizer; LES is a learned evolution strategy; DiBO is a diffusion/posterior-based method; and SynCMA is a configured CMA variant. They are grouped only because all are trained or configured off-line and then run on the held-out functions without further adaptation; the comparison is same-protocol but not a claim that these methods share a training or generalization setting.

\begin{table}[t]
\centering
\wideappendixtablesetup
\caption{BBOB $d{=}500$, full 10-baseline comparison: mean final error, 16 held-out functions ($[-5,5]^d$, 20,000 FEs). NN-BBO baselines and JANUS rerun under our protocol (10 seeds). JANUS-GN shown from the host-controlled run (30 seeds). Blue cell$=$best, green cell$=$second. Lower is better. B2Opt: N/A (network fixed at 10D). \emph{Sig.} row: Bonferroni-corrected wins/ties/losses vs JANUS ($p<0.05$), by a paired Wilcoxon signed-rank test over the 16 per-function final errors (each function's mean over the first 10 JANUS seeds, paired against the corresponding baseline mean); the pairing unit is the function identity, so this is a cross-function comparison of aggregated final errors rather than a within-function seed test.}
\label{tab:bbob_dense_500d_full}
\resizebox{\textwidth}{!}{%
\begin{tabular}{c c ccccccccc cc}
\toprule
 & \multicolumn{1}{c}{Host} & \multicolumn{9}{c}{\textbf{NN-BBO / MetaBBO baselines}} & \multicolumn{2}{c}{\textbf{Ours}} \\
\cmidrule(lr){2-2} \cmidrule(lr){3-11} \cmidrule(lr){12-13}
Func & CMA-ES & RS & GLHF & SYMBOL & LES & RLEPSO & GLEET & LDE & SynCMA & DiBO & \textbf{JANUS-GN} & \textbf{JANUS} \\
\midrule
$f_{4}$ & 4.2e4 & 3.7e5 & 4.7e5 & 1.2e5 & 2.4e5 & 2.5e4 & 3.1e4 & \secondcell{1.6e4} & 3.4e4 & 2.1e5 & \secondcell{1.6e4} & \bestcell{5992} \\
$f_{6}$ & 1.8e6 & 1.6e7 & 1.8e7 & 5.6e6 & 9.3e6 & 9.2e5 & 1.5e6 & 9.9e5 & 5.8e5 & 9.8e6 & \secondcell{7103} & \bestcell{2113} \\
$f_{7}$ & 1.5e4 & 3.9e4 & 4.7e4 & 1.8e4 & 2.3e4 & 1.2e4 & 1.2e4 & \secondcell{4824} & 1.5e4 & 3.5e4 & -- & \bestcell{988} \\
$f_{8}$ & 5.9e8 & 5.9e8 & 7.1e8 & 4.7e7 & 5.1e7 & 1.5e7 & 3.1e7 & 1.3e7 & 9.5e7 & 5.1e8 & \secondcell{3072} & \bestcell{1426} \\
$f_{9}$ & 6.1e8 & 4.1e8 & 4.7e8 & 2.4e6 & 4543 & 9.5e5 & 4.4e6 & 1.8e5 & 7.0e6 & 4.6e8 & \bestcell{508.7} & \secondcell{1323} \\
$f_{10}$ & 3.3e7 & 2.8e8 & 3.7e8 & 8.9e7 & 2.1e8 & 1.8e7 & 2.5e7 & 1.1e7 & 4.1e7 & 2.2e8 & \bestcell{1799} & \secondcell{1.5e6} \\
$f_{11}$ & 1.2e4 & 5833 & 9115 & 3099 & 5136 & \secondcell{2715} & 2746 & 3141 & 3437 & 6575 & \bestcell{1934} & 4037 \\
$f_{12}$ & 8.2e9 & 3.0e10 & 4.8e10 & 4.8e9 & 1.1e10 & 1.9e9 & 2.5e9 & 1.5e9 & 5.5e9 & 1.9e10 & \secondcell{1.8e6} & \bestcell{1.0e4} \\
$f_{13}$ & 1.1e4 & 1.5e4 & 1.5e4 & 8913 & 1.3e4 & 5772 & 6709 & 5347 & 9840 & 1.3e4 & \secondcell{4728} & \bestcell{88.6} \\
$f_{14}$ & 267 & 655 & 838 & 176 & 1458 & 82.5 & 112 & \secondcell{73.0} & 252 & 633 & 1.5e5 & \bestcell{0.12} \\
$f_{18}$ & 73.9 & 140 & 171 & 70.3 & 371 & 54.7 & 57.9 & \secondcell{40.6} & 69.5 & 115 & 157.6 & \bestcell{14.5} \\
$f_{19}$ & 3006 & 2096 & 2488 & 24.2 & 2500 & 22.4 & 39.9 & 12.3 & 45.3 & 2392 & \bestcell{0.34} & \secondcell{11.7} \\
$f_{20}$ & -- & -- & -- & -- & -- & -- & -- & -- & -- & -- & \bestcell{16.9} & \secondcell{17.3} \\
$f_{22}$ & 84.1 & 86.4 & 86.5 & 81.2 & 1184 & 63.6 & 69.1 & 62.2 & 75.5 & 86.0 & \secondcell{11.6} & \bestcell{2.87} \\
$f_{23}$ & 1.65 & 1.64 & 1.85 & \bestcell{1.46} & 1202 & 1.65 & \secondcell{1.50} & 1.64 & 1.64 & 1.65 & 11.1 & 1.66 \\
$f_{24}$ & 2.3e4 & 2.1e4 & 2.2e4 & 9315 & 8878 & \bestcell{7426} & 8514 & 7660 & 9434 & 2.1e4 & 7448 & \secondcell{7435} \\
\midrule
\textbf{Wins} & 0 & 0 & 0 & 1 & 0 & 1 & 0 & 0 & 0 & 0 & 5 & \textbf{9} \\
\textbf{Sig. vs JANUS} & 15/1/0 & 14/2/0 & -- & -- & -- & -- & -- & -- & -- & -- & 0/16/0 & -- \\
\bottomrule
\end{tabular}%
}
\end{table}

\FloatBarrier

\paragraph{$d{=}1000$ extension.} To check whether the 500D trend continues at higher dimension, we evaluate JANUS-GN against the CMA-ES host on a five-function representative BBOB subset at $d{=}1000$, using the same $[-5,5]^d$ box, 20,000-FE budget, and 30 seeds as the main protocol. JANUS improves over the host on all five functions, reducing the geometric-mean final error from $1.19\times10^9$ to $1.27\times10^6$, a $936\times$ improvement. This is consistent with the 30D$\to$500D trend in Table~\ref{tab:bbob_dense_30_100_std}: as ambient dimension grows, host covariance adaptation from rank-based selection alone becomes harder, while the residual-Jacobian estimate remains a local, low-cost signal that does not need to scale with $d$ in the same way.
\FloatBarrier

\subsection{MOO Supplementary Results}
\label{app:moo}

\paragraph{Takeaway.} The MOO gains are regime-dependent rather than uniform. Unlike scalar BBOB, MOO provides natural objective vectors, so the residual coordinates are not synthetic. JANUS-SMS is strongest in low-objective hypervolume selection, while JANUS-AGE2 is strongest in the many-objective setting where AGE-style survival benefits from metric-guided candidates.

\begin{figure}[H]
\centering
\includegraphics[width=0.72\linewidth]{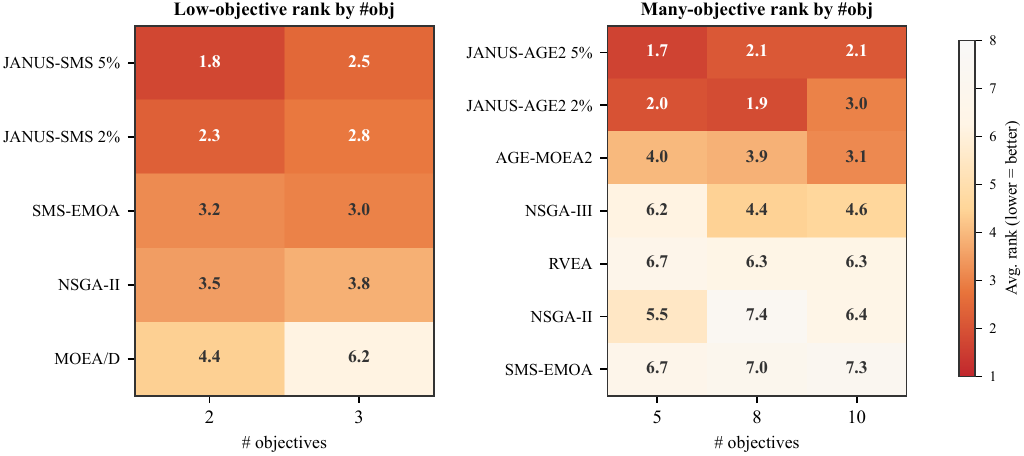}
\caption{\textbf{MOO ranks by objective count.} JANUS-SMS leads low-objective tasks; JANUS-AGE2 leads many-objective tasks.}
\label{fig:moo_matrix}
\end{figure}

\noindent\emph{Readout.} JANUS-SMS leads on 2-objective tasks, where hypervolume contribution is most discriminative and a small number of geometry-guided candidates can quickly improve the dominated region. JANUS-AGE2 leads at 5 and 8 objectives, where many-objective survival has more difficulty preserving useful search directions from ranking alone. Table~\ref{tab:moo_by_nobj_combined} reports the corresponding grouped leaderboard. At 10 objectives the gap narrows because AGE-MOEA2 survival is already strong, leaving less room for proposal-side geometry to change the final ranking.

\begin{table}[H]
\centering
\wideappendixtablesetup
\caption{MOO performance by objective count. Top three methods are shown for each group; JANUS variants lead the low-objective SMS regime and the many-objective AGE2 regime. Blue cell$=$best, green cell$=$second-best per group.}
\label{tab:moo_by_nobj_combined}
\begin{tabular*}{\linewidth}{@{\extracolsep{\fill}} llrrrr}
\toprule
Group & Method & Tasks & Avg. rank & Wins & Norm. HV \\
\midrule
\multicolumn{6}{l}{\emph{Low-objective tasks}} \\
2 & \bestcell{\textbf{JANUS-SMS 5\%}} & 12 & \bestcell{1.75} & \bestcell{8} & \bestcell{1.000} \\
  & \secondcell{\textbf{JANUS-SMS 2\%}} & 12 & \secondcell{2.33} & \secondcell{1} & \secondcell{0.996} \\
  & SMS-EMOA & 12 & 3.17 & 0 & 0.974 \\
3 & \bestcell{\textbf{JANUS-SMS 5\%}} & 6 & \bestcell{2.50} & \bestcell{2} & \bestcell{0.982} \\
  & \secondcell{\textbf{JANUS-SMS 2\%}} & 6 & \secondcell{2.83} & \secondcell{3} & \secondcell{0.986} \\
  & SMS-EMOA & 6 & 3.00 & 0 & 0.967 \\
\midrule
\multicolumn{6}{l}{\emph{Many-objective tasks}} \\
5 & \bestcell{\textbf{JANUS-AGE2 5\%}} & 6 & \bestcell{1.67} & \bestcell{2} & \bestcell{0.997} \\
  & \secondcell{\textbf{JANUS-AGE2 2\%}} & 6 & \secondcell{2.00} & \secondcell{2} & \secondcell{0.983} \\
  & AGE-MOEA2 & 6 & 4.00 & 1 & 0.954 \\
8 & \bestcell{\textbf{JANUS-AGE2 2\%}} & 7 & \bestcell{1.86} & \bestcell{3} & \bestcell{0.975} \\
  & \secondcell{\textbf{JANUS-AGE2 5\%}} & 7 & \secondcell{2.14} & \secondcell{2} & \secondcell{0.959} \\
  & AGE-MOEA2 & 7 & 3.86 & 0 & 0.928 \\
10 & \bestcell{\textbf{JANUS-AGE2 5\%}} & 7 & \bestcell{2.14} & \bestcell{2} & \bestcell{0.965} \\
   & \secondcell{\textbf{JANUS-AGE2 2\%}} & 7 & \secondcell{3.00} & \secondcell{1} & \secondcell{0.963} \\
   & AGE-MOEA2 & 7 & 3.14 & 2 & 0.869 \\
\bottomrule
\end{tabular*}
\end{table}

\paragraph{Ablation readout.} The five-arm ablation is reported in the main paper. Here we keep the broader rank and runtime panels, which show that the zero-regression pattern and modest CUDA overhead hold beyond the compact main-table summary.

\begin{figure}[H]
\centering
\includegraphics[width=0.78\linewidth]{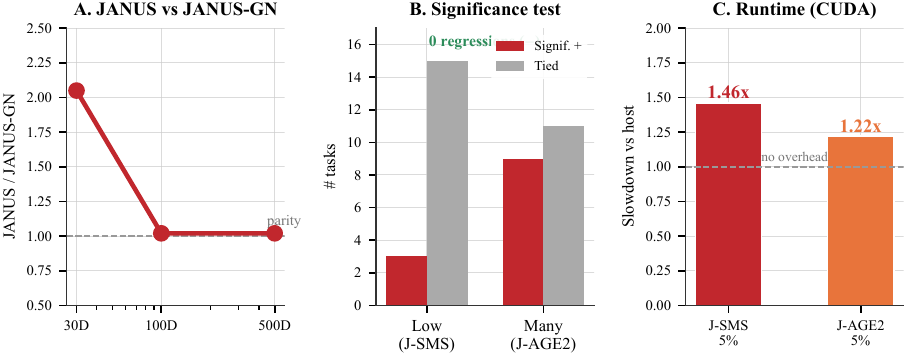}
\caption{\textbf{Ablation, significance, and cost.} (A) JANUS/JANUS-GN ratio. (B) Significance: 0 regressions. (C) CUDA runtime: $1.46\times$/$1.22\times$.}
\label{fig:ablation_panel}
\end{figure}

\paragraph{CMIX fraction and runtime.} Table~\ref{tab:moo_cmix_frac_ablation} sweeps the fraction of offspring allocated to CMIX infill from 0.05 to 0.60. The sweep shows that the best fixed fraction is often larger than the conservative default, but the curve is not monotone on every problem. We therefore use a conservative fraction in the main experiments rather than tuning it per benchmark. Table~\ref{tab:runtime_distribution_pretty} reports wall-clock overhead: with CUDA-accelerated metric estimation, JANUS-SMS 5\% is $1.46\times$ slower than SMS-EMOA and JANUS-AGE2 5\% is $1.22\times$ slower than AGE-MOEA2. This overhead is the price of online Jacobian estimation; it is acceptable when evaluations dominate runtime, but may matter for benchmarks with sub-millisecond evaluation cost.

\begin{table}[H]
\centering
\wideappendixtablesetup
\caption{CMIX candidate fraction ablation (mean HV; higher is better). Blue cell$=$best per row, green cell$=$second-best.}
\label{tab:moo_cmix_frac_ablation}
\begin{tabular*}{\linewidth}{@{\extracolsep{\fill}} lrrrrrrr}
\toprule
Problem & 0.05 & 0.10 & 0.20 & 0.30 & 0.40 & 0.50 & 0.60 \\
\midrule
DTLZ4\_d12\_n3 & 72.7 & 72.9 & 72.9 & \secondcell{72.8} & 72.9 & \bestcell{72.9} & 72.9 \\
WFG1\_d12\_n2 & 8.10 & 9.23 & \secondcell{9.64} & 9.50 & 9.63 & 9.51 & \bestcell{9.86} \\
WFG2\_d12\_n3 & 117 & 119 & 121 & 124 & \bestcell{131} & 127 & \secondcell{128} \\
WFG4\_d12\_n2 & 7.02 & 7.51 & 7.47 & 7.99 & 7.71 & \secondcell{8.07} & \bestcell{8.74} \\
ZDT1\_d30\_n2 & 6.70 & 6.75 & 6.81 & 6.94 & \secondcell{7.09} & 7.07 & \bestcell{7.10} \\
ZDT2\_d30\_n2 & 5.85 & 5.90 & 5.88 & 5.91 & 6.12 & \bestcell{6.29} & \secondcell{6.27} \\
\midrule
\textbf{ALL} & 36.2 & 36.9 & 37.3 & 37.8 & \bestcell{39.1} & 38.4 & \secondcell{38.9} \\
\bottomrule
\end{tabular*}
\end{table}

\begin{table}[H]
\centering
\wideappendixtablesetup
\caption{Runtime overhead with CUDA-accelerated metric estimation. Slowdown is the ratio of mean wall-clock time per independent run for the JANUS variant to its host baseline (lower is better; $1.00\times$ = no overhead). Values match Fig.~\ref{fig:ablation_panel}C.}
\label{tab:runtime_distribution_pretty}
\begin{tabular*}{\linewidth}{@{\extracolsep{\fill}} llrrr}
\toprule
JANUS variant & Host baseline & Tasks & Runs & Slowdown \\
\midrule
\textbf{JANUS-SMS 5\%} & SMS-EMOA & 18 & 180 & \bestcell{$1.46\times$} \\
\textbf{JANUS-AGE2 5\%} & AGE-MOEA2 & 20 & 200 & \secondcell{$1.22\times$} \\
\bottomrule
\end{tabular*}
\end{table}

\paragraph{Per-task winner summary.} Across all 38 MOO tasks, JANUS variants win 16/38 (10/18 low-objective and 6/20 many-objective) by best average rank, not gated by a significance test; the main paper's headline ``12/38 significant wins'' is the subset of these that additionally passes a Mann--Whitney test at $p<0.05$ against the strongest host baseline. The few non-JANUS wins concentrate on simple ZDT instances with disconnected Pareto fronts and on 10-objective tasks where AGE-MOEA2's survival is already highly competitive. This distribution is useful for interpreting the method: JANUS is not a universal dominance trick, but an infill mechanism that helps when the host leaves exploitable local geometry unused.
\noindent\emph{Practical implication.} The MOO appendix shows that JANUS is not uniformly better by construction; its gains appear where the host survival operator leaves room for geometry-guided proposal generation. In practice, this suggests using JANUS as a host augmentation and monitoring whether the host is already saturating diversity preservation before increasing the CMIX fraction.
\FloatBarrier

\subsection{UAV Per-Terrain Results}
\label{app:uav}

\paragraph{Takeaway.} UAV is the clearest structured-objective case because the objective decomposes into meaningful path-quality components and the dimensionality is high enough that blind covariance learning is difficult. JANUS has the best aggregate cost and wins many terrains, but the terrain panel also exposes real failures under the short 2,500-FE budget.

\begin{table}[H]
\centering
\maintablesetup
\caption{UAV budget sweep over 28 terrains and 30 runs. Lower is better.}
\label{tab:uav_long_budget_main}
\begin{tabular*}{\linewidth}{@{\extracolsep{\fill}}lrrr}
\toprule
Budget & \textbf{JANUS} & CMA-ES & Gain \\
\midrule
2,500 & \bestcell{9,600} & 14,486 & $-33.7\%$ \\
10,000 & \bestcell{8,488} & 8,587 & $-1.2\%$ \\
20,000 & \bestcell{8,019} & 8,023 & $-0.05\%$ \\
\bottomrule
\end{tabular*}
\end{table}

\begin{table}[H]
\centering
\wideappendixtablesetup
\caption{UAV path planning: 28 held-out terrains, 30 runs, 2{,}500 FEs (840 values/method). Blue cell$=$best, green cell$=$second-best. Ratio is mean/JANUS (lower=closer to JANUS). \emph{Sig.} column: Bonferroni-corrected Mann--Whitney $U$ wins/ties/losses vs JANUS ($p<0.05/28$ across 28 terrains). The test is unpaired because the CMA-ES baseline runs do not share seeds with the JANUS runs.}
\label{tab:uav_dense}
\resizebox{\linewidth}{!}{%
\begin{tabular*}{\linewidth}{@{\extracolsep{\fill}} clllcccccc}
\toprule
\# & Method & Family & \makecell{Mean\\$\pm$Std} & Median & IQR & \makecell{Ratio\\vs JANUS} & \makecell{Terrains\\rank 1} & \makecell{Sig. vs\\JANUS} \\
\midrule
1 & \textbf{JANUS} & Ours & \bestcell{\makecell{9,600 \\ $\pm$5,966}} & \bestcell{6,644} & 9,579 & \bestcell{1.00} & 15 & -- \\
2 & DE & Traditional & \secondcell{\makecell{11,006 \\ $\pm$3,865}} & \secondcell{9,261} & 7,412 & 1.15 & 1 & 24/3/1 \\
3 & JDE21 & Adaptive & \makecell{11,017 \\ $\pm$5,604} & 8,579 & 2,795 & 1.15 & 1 & -- \\
4 & \textbf{JANUS-GN} & Ablation & \makecell{11,317 \\ $\pm$5,373} & 9,508 & 9,815 & 1.18 & 8 & 0/28/0 \\
5 & RLDEAFL & MetaBBO & \makecell{11,411 \\ $\pm$5,818} & 8,197 & 9,227 & 1.19 & 0 & -- \\
6 & GLEET & MetaBBO & \makecell{11,603 \\ $\pm$7,085} & 7,455 & 9,808 & 1.21 & 1 & -- \\
7 & PSO & Traditional & \makecell{11,792 \\ $\pm$4,454} & 9,156 & 8,223 & 1.23 & 0 & -- \\
8 & RS & Traditional & \makecell{13,891 \\ $\pm$6,859} & 10,394 & 9,169 & 1.45 & 0 & 23/5/0 \\
9 & CMA-ES & Adaptive & \makecell{14,486 \\ $\pm$6,559} & 10,908 & 9,956 & 1.51 & 0 & 24/3/1 \\
10 & SAHLPSO & Adaptive & \makecell{14,865 \\ $\pm$6,885} & 11,265 & 9,716 & 1.55 & 0 & -- \\
11 & GLHF & MetaBBO & \makecell{20,354 \\ $\pm$5,918} & 20,921 & 7,137 & 2.12 & 0 & -- \\
12 & LES & MetaBBO & \makecell{26,968 \\ $\pm$14,006} & 26,220 & 20,365 & 2.81 & 2 & -- \\
\bottomrule
\end{tabular*}
}
\end{table}

\noindent\emph{Readout.} JANUS is most useful in the early-budget regime. At 2,500 evaluations, the local metric gives the host a faster route toward feasible, low-cost path shapes, as also reflected in the final-value distribution in Fig.~\ref{fig:uav_dist}. At longer budgets, CMA-ES catches up and the advantage converges toward parity, so JANUS should be interpreted as an early-budget accelerator rather than an asymptotic replacement. This is the regime we care about most for expensive simulation-driven planning.

\noindent
\begin{minipage}{\linewidth}
\centering
\includegraphics[width=0.58\linewidth]{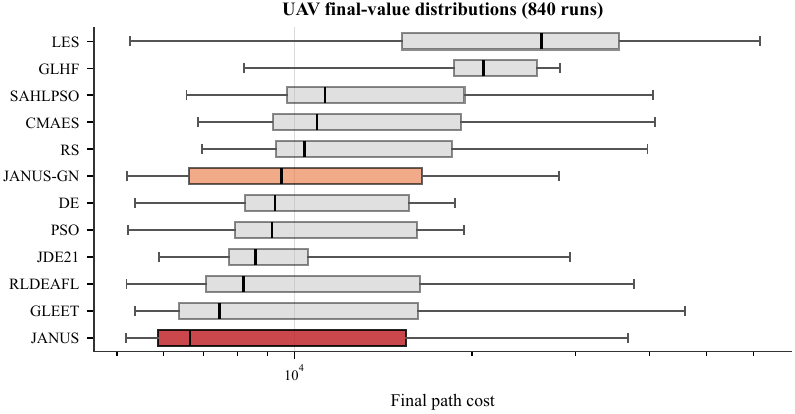}
\captionof{figure}{UAV final-value distributions over 840 runs.}
\label{fig:uav_dist}
\end{minipage}

\noindent\emph{Readout.} JANUS has the best center of mass, while several baselines occasionally find strong terrain-specific paths. The distribution plot is therefore not meant to claim pointwise dominance; it shows that JANUS shifts the bulk of outcomes downward while still leaving a nontrivial tail of terrain-specific failures. This motivates the terrain-level audit rather than relying only on aggregate averages.
\paragraph{3D path visualizations.} The main paper now includes the representative T1/T17 convergence and path panels. Figure~\ref{fig:uav_paths_app} therefore keeps only additional terrain-level path examples: T9/T41 are hard cases where non-JANUS baselines can find safer early paths, while T25/T55 are representative JANUS wins. This avoids duplicating the main figure while keeping the visual evidence aligned with the terrain-level audit.

\begin{figure}[H]
\centering
\begin{subfigure}{0.24\linewidth}\centering
\includegraphics[width=\linewidth,trim=20pt 32pt 14pt 0pt,clip]{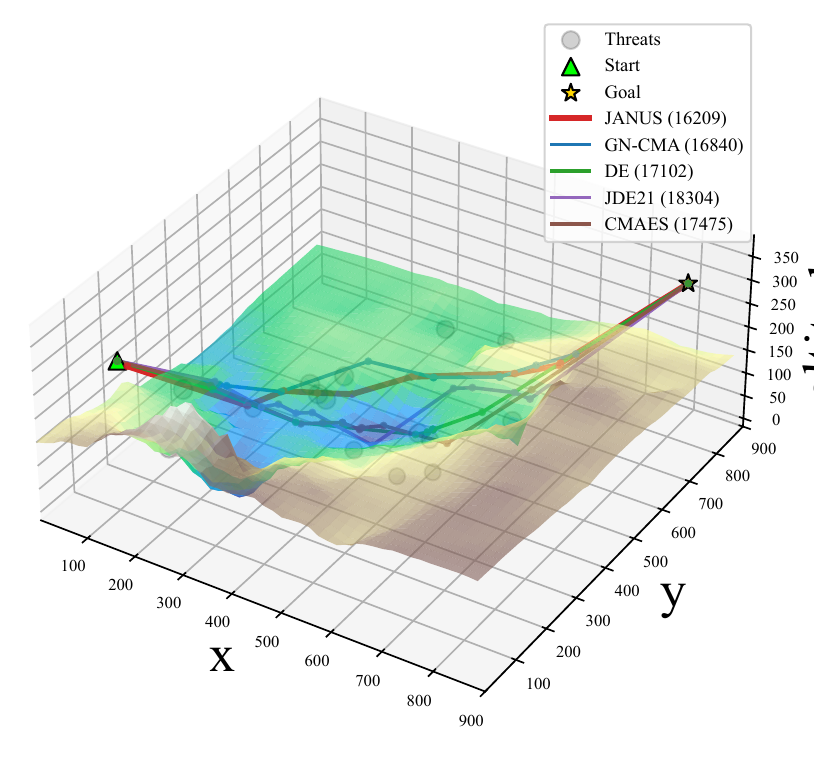}
\caption{T9: hard}
\end{subfigure}\hfill
\begin{subfigure}{0.24\linewidth}\centering
\includegraphics[width=\linewidth,trim=20pt 32pt 14pt 0pt,clip]{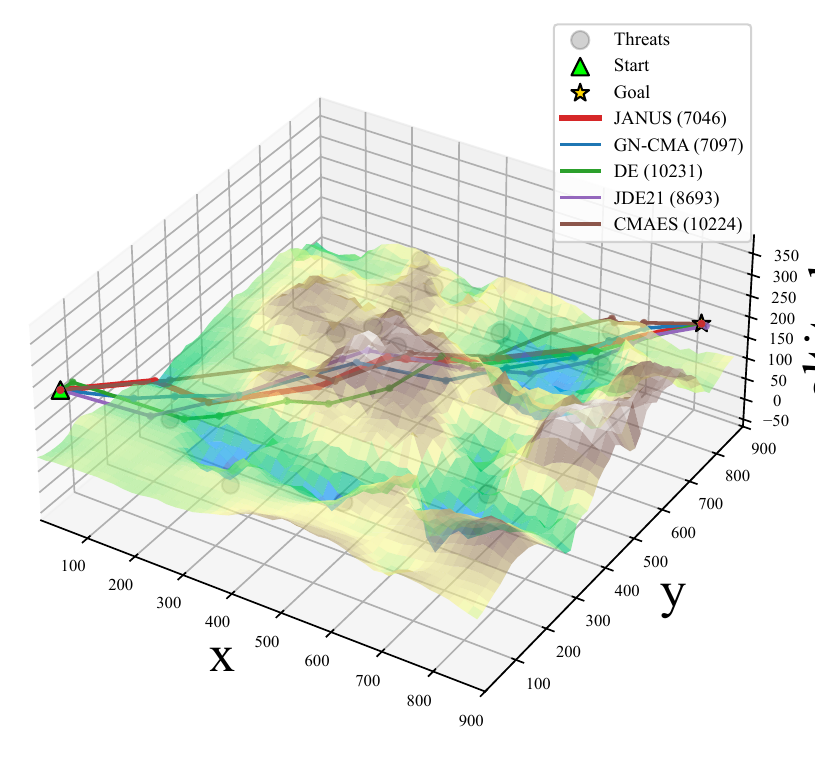}
\caption{T25: JANUS win}
\end{subfigure}\hfill
\begin{subfigure}{0.24\linewidth}\centering
\includegraphics[width=\linewidth,trim=20pt 32pt 14pt 0pt,clip]{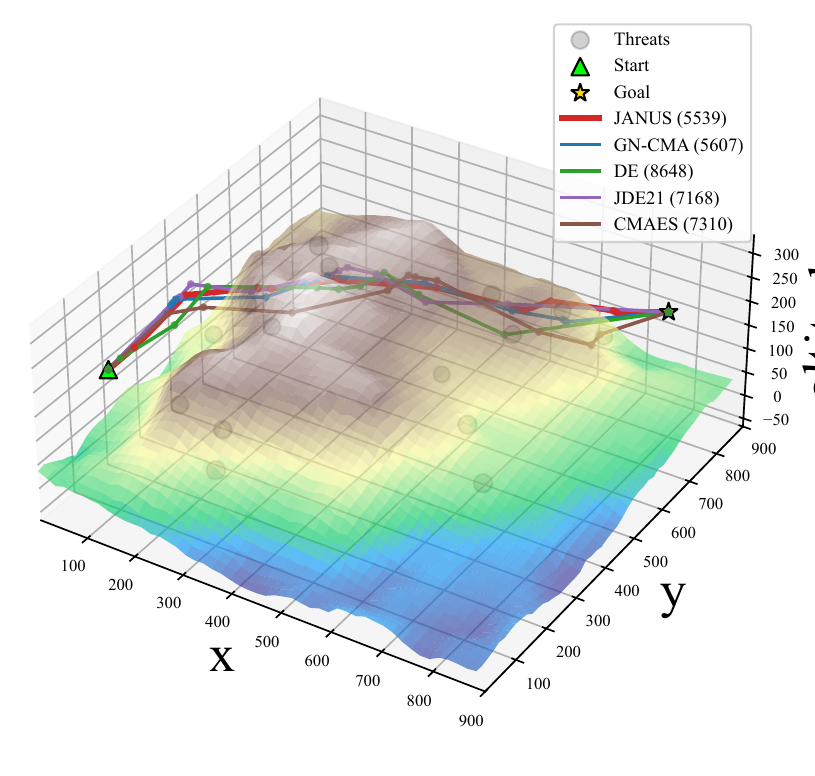}
\caption{T41: hard}
\end{subfigure}\hfill
\begin{subfigure}{0.24\linewidth}\centering
\includegraphics[width=\linewidth,trim=20pt 32pt 14pt 0pt,clip]{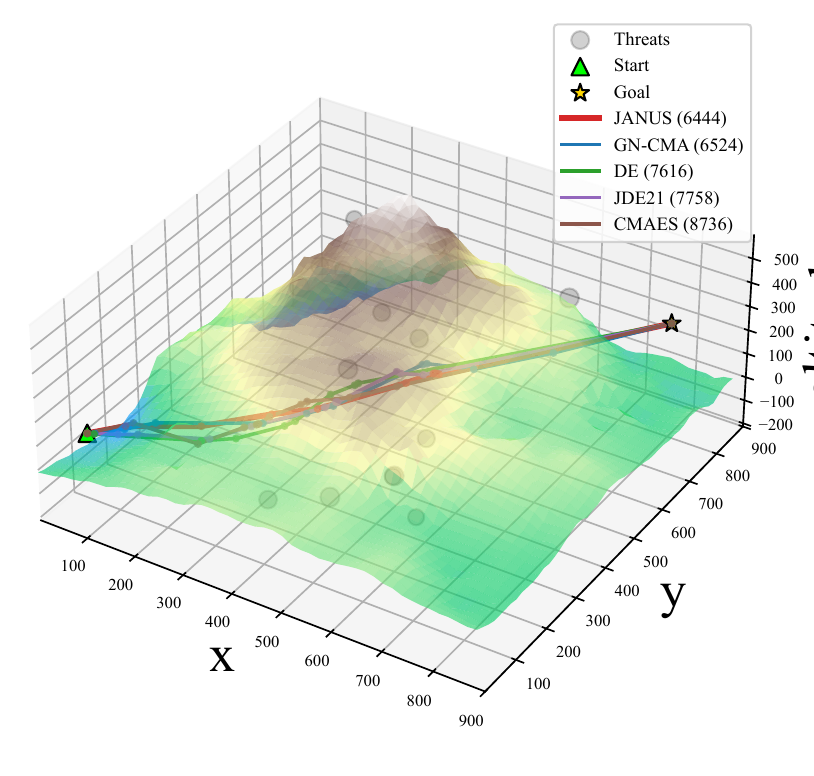}
\caption{T55: JANUS win}
\end{subfigure}
\caption{\textbf{Additional UAV 3D path visualizations} on four terrains (T9/T25/T41/T55). JANUS often finds smooth, low-cost trajectories, but T9/T41 show that this behavior is not universal under the short 2,500-FE budget.}
\label{fig:uav_paths_app}
\end{figure}
\noindent\emph{Terrain audit.} Table~\ref{tab:dense_uav_terrain_panel} is intentionally a compact audit rather than a visual centerpiece: it keeps every held-out terrain visible and makes the significant wins, ties, and the one significant loss against the CMA-ES host easy to locate. We keep the non-wins in the same table as the wins because they explain when CMIX should be used cautiously: the one true failure occurs where CMIX's covariance reshaping actively undoes a strong exploitation signal.
\begin{table}[H]
\centering
\wideappendixtablesetup
\caption{UAV terrain audit. Every held-out terrain is kept; gain compares JANUS with the CMA-ES host under the identical 2,500-FE protocol. Cell color reflects statistical significance (Bonferroni-corrected Mann--Whitney $U$ across 28 terrains, consistent with Table~\ref{tab:uav_dense}'s Sig.\ column): green$=$significant win, amber$=$not significant (tie), red$=$significant loss.}
\label{tab:dense_uav_terrain_panel}
\begin{tabular*}{0.49\linewidth}{@{\extracolsep{\fill}} lrrr}
\toprule
Terrain & JANUS & CMA-ES & Gain \\
\midrule
T1 & 7035 & 10062 & \cellcolor{janusgreen!55}+30.1\% \\
T3 & 9716 & 17753 & \cellcolor{janusgreen!55}+45.3\% \\
T5 & 5974 & 19695 & \cellcolor{janusgreen!55}+69.7\% \\
T7 & 5834 & 18681 & \cellcolor{janusgreen!55}+68.8\% \\
T9 & 23155 & 19402 & \cellcolor{janusamber!30}-19.3\% \\
T11 & 21077 & 18970 & \cellcolor{janusamber!30}-11.1\% \\
T13 & 7527 & 10261 & \cellcolor{janusgreen!55}+26.6\% \\
T15 & 6528 & 8369 & \cellcolor{janusgreen!55}+22.0\% \\
T17 & 6584 & 20170 & \cellcolor{janusgreen!55}+67.4\% \\
T19 & 22186 & 19407 & \cellcolor{janusamber!30}-14.3\% \\
T21 & 7515 & 10439 & \cellcolor{janusgreen!55}+28.0\% \\
T23 & 5807 & 8322 & \cellcolor{janusgreen!55}+30.2\% \\
T25 & 11623 & 18375 & \cellcolor{janusgreen!55}+36.7\% \\
T27 & 15966 & 27321 & \cellcolor{janusgreen!55}+41.6\% \\
\bottomrule
\end{tabular*}\hfill
\begin{tabular*}{0.49\linewidth}{@{\extracolsep{\fill}} lrrr}
\toprule
Terrain & JANUS & CMA-ES & Gain \\
\midrule
T29 & 15998 & 10607 & \cellcolor{red!12}-50.8\% \\
T31 & 6044 & 18533 & \cellcolor{janusgreen!55}+67.4\% \\
T33 & 7696 & 10066 & \cellcolor{janusgreen!55}+23.5\% \\
T35 & 5304 & 7547 & \cellcolor{janusgreen!55}+29.7\% \\
T37 & 10520 & 18362 & \cellcolor{janusgreen!55}+42.7\% \\
T39 & 10395 & 17266 & \cellcolor{janusgreen!55}+39.8\% \\
T41 & 6621 & 9257 & \cellcolor{janusgreen!55}+28.5\% \\
T43 & 6119 & 16738 & \cellcolor{janusgreen!55}+63.4\% \\
T45 & 5975 & 8912 & \cellcolor{janusgreen!55}+32.9\% \\
T47 & 8285 & 9291 & \cellcolor{janusgreen!55}+10.8\% \\
T49 & 5690 & 8906 & \cellcolor{janusgreen!55}+36.1\% \\
T51 & 9329 & 11420 & \cellcolor{janusgreen!55}+18.3\% \\
T53 & 7572 & 10028 & \cellcolor{janusgreen!55}+24.5\% \\
T55 & 6728 & 21444 & \cellcolor{janusgreen!55}+68.6\% \\
\bottomrule
\end{tabular*}
\end{table}

\noindent\emph{Readout.} Under this host-controlled comparison (optimization budget, terrain set, and number of runs matched across methods as in the BBOB and MOO tables; the statistical test is the Bonferroni-corrected \emph{unpaired} Mann--Whitney $U$ across 28 terrains, chosen because these runs are not seed-paired), JANUS significantly beats the CMA-ES host on 24/28 terrains, ties on 3 (T9, T11, T19; the raw mean favors CMA-ES but the difference is not significant), and loses significantly on exactly 1 (T29). This matches Table~\ref{tab:uav_dense}'s aggregate Sig.\ column for CMA-ES (24/3/1) exactly. On all 4 of these non-win terrains, JANUS-GN (exploitation only) ranks first among all 12 compared methods and clearly beats CMA-ES, while the full JANUS variant falls to 5th--9th place; the GN signal itself is strong there, but CMIX's covariance reshaping is harmful, most severely on T29 (JANUS-GN 6{,}063 vs.\ full JANUS 15{,}998, CMA-ES 10{,}607, $p<10^{-5}$). This is consistent with JANUS-GN ranking first on 8 terrains overall in the main paper's ``ranks first on 15/28'' count (computed against all 12 methods): these 4 terrains are among the 8, confirming that CMIX should be used cautiously when the recent trace has not yet produced a stable curvature estimate. This distinction matters operationally: GN injection is a safer fallback when the path distribution is still changing rapidly, while CMIX is most useful after the recent window contains coherent path variations.
\FloatBarrier

\section{Negative Results and Operating Boundaries}
\label{app:negative}

\paragraph{Operating regime.} Sec.~\ref{sec:implementation} summarizes the operating regime found across experiments; we do not repeat it here. The mechanism-level detail worth adding is that, in the high-dimensional BBOB and UAV regimes where JANUS helps most, CMIX is not simply adding more candidates: it reallocates the same sampling scale toward directions that the fitted residual model regards as locally flat or under-explored.

\paragraph{Takeaway.} JANUS is a local-geometry method. The inverse Hessian metric $T=(G+\lambda I)^{-1}$ captures the landscape near the current mean, but cannot reveal disconnected basins outside the reachable radius or outside the trust region of the local Jacobian estimate. The failure cases summarized in Table~\ref{tab:negative_evidence} are therefore expected rather than incidental. JANUS is strongest when the recent trace gives a reliable metric, while JANUS-GN is safer when basin discovery, unstable curvature, or noisy pseudo-residuals dominate. These negative cases are included to make the method's operating boundary explicit.

\begin{table}[H]
\centering
\wideappendixtablesetup
\caption{Negative evidence summary: cases where JANUS underperforms or ties.}
\label{tab:negative_evidence}
\begin{tabular*}{\textwidth}{@{\extracolsep{\fill}} l p{0.28\linewidth} p{0.46\linewidth}}
\toprule
Regime & Evidence & Interpretation \\
\midrule
BBOB 30D/100D & CMIX loses more often than JANUS-GN & Scalar pseudo-residual curvature can be noisy in low dimension. \\
BBOB 500D & Host-control losses on $f_{22}$/$f_{23}$ & Local metric helps broadly but not when basin discovery dominates. \\
ABOM reference & $f_{19}$/$f_{20}$ anomalous ratios & Reported-reference comparisons are interpreted qualitatively. \\
UAV short budget & 1 significant host-control loss (T29), 3 ties (T9/11/19); JANUS-GN ranks first on all 4 & CMIX can undo a strong GN signal before curvature stabilizes. \\
MOO saturation & 4 tied tasks & Survival already preserves diversity, limiting proposal-geometry gains. \\
\bottomrule
\end{tabular*}
\end{table}

\paragraph{How to read the failures.} The BBOB losses do not show that the GN candidate is harmful in general. They show that scalar pseudo-residual curvature can be misleading when the useful move is global, not local. The host-controlled comparison points mainly to $f_{22}$/$f_{23}$; the ABOM reported-reference ratios add a few anomalous cases that we treat qualitatively instead of as same-protocol failures.

The UAV losses have a different cause. The residual structure is meaningful, but the 2,500-FE budget can be too short for a stable curvature estimate. The MOO ties occur when the host survival operator already preserves enough diversity, leaving little room for proposal-side geometry. These failure modes call for different responses: restarts for disconnected basins, GN-only fallback for unstable curvature, and conservative CMIX fractions when the host is already strong.

\paragraph{Practical use pattern.} A practical use pattern follows from this boundary. For scalar objectives, use JANUS-GN or a small CMIX fraction until the recent window produces stable curvature directions. Increase CMIX only when the host is spending many evaluations in a locally coherent region. For structured objectives or multi-objective tasks, where residual coordinates are observed rather than fitted, CMIX can be enabled earlier. In all cases, the host should keep ownership of restarts, boundary handling, and diversity preservation. JANUS is intentionally not a trust-region replacement and not a learned optimizer: it is a small, local, bounded proposal mechanism whose value is highest when evaluations are expensive enough that reusing local geometry matters.

\subsection{Industrial Validation: ISP Calibration ($d{=}1135$)}
\label{app:awb}

To complement the synthetic benchmarks, we report a single deployment on a proprietary image signal processing (ISP) calibration task. The objective is the mean reproduction angle error (MRAE) of an auto white balance (AWB) module across $m{=}111$ test scenes; the search space is $d{=}1135$ ISP parameters. Each evaluation runs the full ISP pipeline on all scenes, making repeated large-scale experimentation infeasible. We therefore report a single run per method under a fixed 5,000-evaluation budget.

\begin{table}[H]
\centering
\appendixtablesetup
\caption{ISP calibration (AWB, $d{=}1135$, $m{=}111$ scenes): running-best MRAE. Lower is better. Single run per method, 5,000 evaluations. Blue cell$=$best, green cell$=$second-best per column.}
\label{tab:awb}
\begin{tabular*}{\textwidth}{@{\extracolsep{\fill}} lrrrrr}
\toprule
Method & @1000 & @2000 & @3000 & @4000 & @5000 \\
\midrule
CMA-ES (baseline) & 1.529 & 1.276 & 1.177 & 1.127 & 1.056 \\
JANUS-GN (exploit only) & \bestcell{1.154} & \bestcell{0.968} & \secondcell{0.918} & \secondcell{0.866} & \secondcell{0.837} \\
JANUS (GN + CMIX) & \secondcell{1.200} & \secondcell{0.993} & \bestcell{0.897} & \bestcell{0.863} & \bestcell{0.819} \\
\bottomrule
\end{tabular*}
\end{table}

\noindent\emph{Readout.} Both JANUS variants reduce MRAE by ${\sim}21$--$22\%$ relative to CMA-ES at @5000. The full JANUS variant (GN+CMIX) trails the GN-only variant early (@1000) because it allocates sampling energy to exploration, but overtakes it from @3000 onward and achieves the lowest final error (0.819 vs.\ 0.837). The CMIX advantage continues to grow at @5000 with no sign of reversal, consistent with the delayed benefit of CMIX observed in the controlled ablation (Sec.~\ref{sec:exp-rq4}). Because this is a single run on a proprietary pipeline, we treat it as deployment evidence rather than a statistically controlled comparison.
\FloatBarrier

\subsection{Derivation Notes}
\label{app:proofs}

\paragraph{Residual coordinates to GN.} For component observations, JANUS starts from the form $f(x)=m^{-1}\sum_i r_i(x)$ and defines $s_i(x)=\sqrt{r_i(x)}$, so $f(x)=m^{-1}\|s(x)\|^2$. This is the standard least-squares view behind Gauss--Newton. Scalar-only objectives do not expose true components, but JANUS can still construct a local residual coordinate system by fitting linear or quadratic models to the recent trace. The resulting residual map is only local, but that is sufficient for an infill module whose candidates are clipped to the host sampling region. With $J=\partial s/\partial x$, the local first- and second-order approximations are
\[
\nabla f(x)\approx \frac{2}{m}J^\top s(x),\qquad H_{\rm GN}(x)\approx \frac{2}{m}J^\top J.
\]
Ignoring the common factor $2/m$ and adding Levenberg--Marquardt damping gives the injected exploitation candidate
\[
(J^\top J+\lambda I)\Delta x=-J^\top s.
\]
Thus explicit residual components are useful but not required: scalar JANUS uses the same algebra with a fitted local residual chart.

\paragraph{Black-box Jacobian estimation.} JANUS estimates $J$ only from the recent search trace. Let $X_W$ be the current window, let $U_r$ be the local PCA basis, and write each centered sample as $z=U_r^\top(x-\bar x)$. For residual observations $S$ and standardized design matrix $Z$, ridge regression gives
\[
\hat B=(Z^\top Z+\rho I)^{-1}Z^\top S,
\]
where columns of $\hat B$ are residual slopes in PCA coordinates. Mapping back through $U_r$ gives the ambient Jacobian estimate used by both GN and CMIX. The PCA restriction is not a modeling assumption about the true objective; it is a variance-control device that prevents underdetermined high-dimensional regressions from dominating the host optimizer.

\paragraph{CMIX metric and dimension scaling.} CMIX forms the inverse Hessian metric $T=(J^\top J+\lambda I)^{-1}$, normalizes its trace to match the host covariance scale, and mixes it with the host covariance only through a bounded fraction. A fixed mixing weight is too small in very high dimension and too aggressive in low dimension, so JANUS uses
\[
w=w_{\rm base}\frac{\min(m,|\mathcal W|)}{d},\qquad w\in[w_{\min},w_{\max}].
\]
For UAV, $m=111$, $|\mathcal W|=300$, and $d=1135$, yielding $w\approx0.049$. This explains why the UAV experiments require a dimension-aware CMIX weight: a fixed $3\times10^{-6}$-style fraction would make the metric contribution effectively invisible.

\noindent\textbf{Proof of Proposition~\ref{prop:restricted}.} Write the SVD $J=U\Sigma V^\top$ with $\Sigma=\operatorname{diag}(\sigma_1,\ldots,\sigma_r)$, $r=\operatorname{rank}(J)$. The inverse Hessian metric is $T=(J^\top J+\lambda I)^{-1}=V(\Sigma^\top\Sigma+\lambda I)^{-1}V^\top$, so each right-singular direction $v_i$ is an eigenvector of $T$ with eigenvalue $(\sigma_i^2+\lambda)^{-1}$, and directions in $\ker(J)$ are eigenvectors with eigenvalue $\lambda^{-1}$. Since every eigenvalue satisfies $(\sigma_i^2+\lambda)^{-1}\ge(\sigma_{\max}(J)^2+\lambda)^{-1}$ and $\lambda^{-1}\ge(\sigma_{\max}(J)^2+\lambda)^{-1}$, the metric is uniformly positive definite:
\begin{equation}
T\ \succeq\ \frac{1}{\sigma_{\max}(J)^2+\lambda}\,I.
\label{eq:T-pd}
\end{equation}

\emph{The restricted covariance annihilates $\operatorname{span}(B)^\perp$ from both sides.} Any restricted-covariance update
$C'=(1-\alpha)C_{\rm host}+\alpha B M' B^\top$ with $C_{\rm host}=BMB^\top$ can be written $C'=B\big[(1-\alpha)M+\alpha M'\big]B^\top=:B\widetilde M B^\top$, whose range and corange lie in $\operatorname{span}(B)$. With $B$ having orthonormal columns, $P_\perp=I-BB^\top$ is the orthogonal projector onto $\operatorname{span}(B)^\perp$ and $B^\top P_\perp=B^\top-(B^\top B)B^\top=0$. Hence
\begin{equation}
P_\perp C' P_\perp \;=\; (P_\perp B)\widetilde M (B^\top P_\perp) \;=\; 0
\label{eq:cprime-zero}
\end{equation}
regardless of $M$, $M'$, and $\alpha$: the restricted family has no degrees of freedom in the complement of its own basis. Note that this argument uses only that $C'$ is spanned by $B$; it does \emph{not} require any right-singular vector of $J$ to lie exactly in $\operatorname{span}(B)^\perp$.

\emph{A projection-based Frobenius bound.} Since $P_\perp$ is an orthogonal projector, the map $A\mapsto P_\perp A P_\perp$ is a Frobenius-norm contraction ($\|P_\perp A P_\perp\|_F\le\|A\|_F$). Applying it to $A=C'-T$ and using \eqref{eq:cprime-zero},
\begin{equation}
\|C'-T\|_F\ \ge\ \|P_\perp(C'-T)P_\perp\|_F\ =\ \|P_\perp T P_\perp\|_F.
\label{eq:proj-bound}
\end{equation}
It remains to lower-bound $\|P_\perp T P_\perp\|_F$. Because $T$ is symmetric, congruence by $P_\perp$ preserves the Loewner order, so \eqref{eq:T-pd} gives $P_\perp T P_\perp\succeq(\sigma_{\max}(J)^2+\lambda)^{-1}P_\perp$. For positive semidefinite matrices $X\succeq Y\succeq0$ we have $\|X\|_F\ge\|Y\|_F$ (as $\|X\|_F^2-\|Y\|_F^2=\langle X-Y,X+Y\rangle\ge0$), and $\|P_\perp\|_F=\sqrt{\operatorname{rank}(P_\perp)}=\sqrt{d-k}$. Therefore
\begin{equation}
\|C'-T\|_F\ \ge\ \|P_\perp T P_\perp\|_F\ \ge\ \frac{\|P_\perp\|_F}{\sigma_{\max}(J)^2+\lambda}\ =\ \frac{\sqrt{d-k}}{\sigma_{\max}(J)^2+\lambda}\ >\ 0,
\label{eq:final-bound}
\end{equation}
strictly positive because $k<d$. In particular $C'\ne T$. The bound has the correct units of an inverse curvature ($1/\sigma^2$), scales with the dimension deficit $d-k$ of the restricted basis, and does not depend on $M,M',\alpha$. The middle quantity $\|P_\perp T P_\perp\|_F$ is the exact residual-aligned curvature of $T$ restricted to $\operatorname{span}(B)^\perp$, which the host sets identically to zero. $\square$

\emph{Remark.} The proposition does not claim that LM-CMA or VkD-CMA underperform JANUS in absolute terms; their host adaptation may be sufficient when the trace itself concentrates in $\operatorname{span}(B)$. The claim is the structural one stated in the introduction: restricted-covariance hosts cannot, in general, express the residual-aligned curvature that JANUS derives from $J$. The empirical question of when this gap matters is taken up in Sec.~\ref{sec:experiments}.

\emph{Scope: purely rank-$k$ hosts.} Prop.~\ref{prop:restricted} assumes a \emph{purely} rank-$k$ covariance $C_{\rm host}=BMB^\top$ with $k<d$: the annihilation identity \eqref{eq:cprime-zero} relies on $C'$ having range and corange inside $\operatorname{span}(B)$. If a restricted-covariance implementation adds an isotropic floor $\epsilon I$ or a full-rank diagonal term (as some practical LM-CMA/VkD-CMA variants do for numerical stability), then $C'$ no longer annihilates $\operatorname{span}(B)^\perp$ and the exact-mismatch statement weakens to an $O(\epsilon)$-approximate one; the proposition should be read as governing the rank-$k$ low-rank correction, not every implementation detail. The hypothesis also fails for a full-rank host such as vanilla CMA-ES, where $\operatorname{span}(B)=\mathbb{R}^d$, $P_\perp=0$, and the bound is vacuously zero; there the proposition makes no structural claim. This distinction separates two questions that should not be conflated. Table~\ref{tab:restricted_cov_baselines} addresses the \emph{structural} one---whether a restricted-covariance host can match a full-rank host at all---and answers it in the negative, consistent with Prop.~\ref{prop:restricted}. Whether JANUS additionally improves on a full-rank host is a separate, \emph{empirical} convergence-speed question, and is not structural: learning a general full covariance from rank-based updates involves $O(d^2)$ covariance parameters and can therefore become sample- and computation-intensive in high dimensions~\citep{hansen2016cma}, whereas JANUS computes $T=(J^\top J+\lambda I)^{-1}$ from the fitted $J$ in one step and mixes it in immediately. That convergence-speed comparison is quantified separately, on real 30-seed data, in the DFO panel of Table~\ref{tab:new_baselines}.

\paragraph{Propositions restated with full proofs.}
Propositions~\ref{prop:descent}--\ref{prop:trace} restate Propositions~\ref{prop:descent-main} and \ref{prop:safety-main} from the main paper (the latter split into two parts) so that complete proofs can be given below. Proposition~\ref{prop:blockwise} is the only new result in this section.
\begin{proposition}[Blockwise assembly is a degenerate GN case]
\label{prop:blockwise}
Partition coordinates into groups and constrain the fitted normal matrix $J^\top J$ to be block diagonal under this partition. Then the damped GN solve $(J^\top J+\lambda I)\Delta=-J^\top s$ decomposes into independent block solves. Per-component best-fragment assembly is recovered as a zero-coupling, zero-damping, discrete-search limit of this blockwise update.
\end{proposition}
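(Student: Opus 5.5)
The plan has two parts: a clean linear-algebra decomposition, and an interpretive limiting argument that has to be pinned down precisely enough to be a statement.

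\emph{Block decomposition.} Order the coordinates according to the partition $\{1,\dots,d\}=I_1\cup\cdots\cup I_p$ and write $\Delta=(\Delta_1,\dots,\Delta_p)$. Write $J=[J_1,\dots,J_p]$ by column blocks, so that $(J^\top J)_{ab}=J_a^\top J_b$. The block-diagonal constraint says exactly that $J_a^\top J_b=0$ for $a\ne b$; equivalently, we replace $G$ by $\mathrm{blkdiag}(J_a^\top J_a)$.

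Since $\lambda I$ is also block diagonal, $G+\lambda I=\mathrm{blkdiag}(J_a^\top J_a+\lambda I_{|I_a|})$. The right-hand side splits as $J^\top s=(J_1^\top s,\dots,J_p^\top s)$. The system therefore decouples into $p$ independent solves
\[
(J_a^\top J_a+\lambda I)\Delta_a=-J_a^\top s,
\]
each invertible for $\lambda>0$. Equivalently, by the same identity as in Prop.~\ref{prop:descent-main}, $\Delta_a$ minimizes the damped local model $\|s+J_a\Delta_a\|^2+\lambda\|\Delta_a\|^2$ independently of the other blocks. The cross terms $2\Delta_a^\top J_a^\top J_b\Delta_b$ vanish, so the model objective $\|s+J\Delta\|^2+\lambda\|\Delta\|^2$ is a sum of per-block terms plus a constant.

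\emph{Recovering best-fragment assembly.} Next I formalize per-component best-fragment assembly. Each component (residual) $i$ is associated with a coordinate group $I_{a(i)}$, and each block of the candidate is set to the corresponding fragment $x_j|_{I_a}$ of the window sample $x_j$ that minimizes that component's residual. To connect this to the blockwise update, I take three limits in order:
\begin{enumerate}
\item \emph{Zero coupling}: each residual depends only on its own block, so $J$ itself is block-structured (residual rows for block $a$ have support in $I_a$). The block-diagonal constraint is then not an approximation, and the per-block objective is $\|s_a+J_a\Delta_a\|^2$.
\item \emph{Zero damping}: let $\lambda\to0$. The block problem becomes an unregularized per-block least-squares minimization of the block residual.
\item \emph{Discrete search}: restrict each $\Delta_a$ to the finite set of displacements $\{x_j|_{I_a}-x_b|_{I_a}: x_j\in\mathcal W\}$, and replace the linear model by the observed residual values.
\end{enumerate}
Under these limits the block minimizer selects exactly the window fragment with the smallest observed block residual, which is the assembly rule. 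Because the objective is separable, the joint argmin over the product of the finite sets equals the product of the per-block argmins.

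\emph{Main obstacle.} The difficulty is making ``recovered as a limit'' precise rather than heuristic, because the discrete restriction replaces a continuous linear solve with a combinatorial choice. I would present the third step not as a literal limit but as a restriction of the feasible set of the same separable block objective. The claim then becomes: with zero coupling, $\lambda=0$, and the feasible set restricted to window fragments, the blockwise GN objective's argmin coincides with best-fragment assembly. Separability is what does the work here, and it is guaranteed by the block-diagonal constraint.
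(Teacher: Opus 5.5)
Your proposal is correct and follows the paper's argument. A block-diagonal normal matrix splits $(J^\top J+\lambda I)\Delta=-J^\top s$ into independent block solves, and zero damping plus restricting each block step to historical window fragments reduces the update to picking the fragment with the smallest component residual. You also make explicit two assumptions the paper's proof uses only implicitly: that each residual depends on its own block only (a block-diagonal $J^\top J$ alone does not imply this), and that the discrete step is scored by observed rather than linearly predicted residuals.

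The one imprecision is in your closing paragraph. There you call the discrete step a restriction of ``the same separable block objective,'' but step~3 also replaces the linear model by observed residuals. That swap is necessary unless the fitted model interpolates the window, so your closing paragraph should say so.
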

\begin{proposition}[Damped GN descends on the fitted residual model]
\label{prop:descent}
For $\hat s(x_b+\Delta)=s_b+J\Delta$ and $\hat f(\Delta)=\|s_b+J\Delta\|_2^2$, the JANUS step $\Delta_{\rm GN}=-(J^\top J+\lambda I)^{-1}J^\top s_b$ satisfies $\nabla\hat f(0)^\top\Delta_{\rm GN}\le0$ for $\lambda>0$, strictly whenever $J^\top s_b\ne0$.
\end{proposition}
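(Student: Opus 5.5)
The plan is a direct gradient computation followed by a positive-definiteness argument. The one point that needs care is strictness.

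\textbf{Step 1: the gradient.} Expand the model as
\[
\hat f(\Delta)=s_b^\top s_b+2s_b^\top J\Delta+\Delta^\top J^\top J\Delta ,
\]
so that $\nabla\hat f(0)=2J^\top s_b$. Substituting $\Delta_{\rm GN}$ gives
\[
\nabla\hat f(0)^\top\Delta_{\rm GN}=-2\,s_b^\top J\,(J^\top J+\lambda I)^{-1}J^\top s_b .
\]
This is the identity displayed in Prop.~\ref{prop:descent-main}. Writing $g=J^\top s_b$, the quantity becomes $-2\,g^\top A^{-1}g$ with $A=J^\top J+\lambda I$.

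\textbf{Step 2: positive definiteness of $A^{-1}$.} For any $v$,
\[
v^\top A v=\|Jv\|^2+\lambda\|v\|^2\ge\lambda\|v\|^2 ,
\]
so $A$ is symmetric positive definite for $\lambda>0$, and in particular invertible. Its inverse is then also symmetric positive definite. One way to see this is through the eigendecomposition. A cleaner way is to write $g^\top A^{-1}g=u^\top A u$ with $u=A^{-1}g$; this is nonnegative, and it vanishes only if $u=0$, that is, only if $g=0$.

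\textbf{Step 3: conclusion.} Step 2 gives $g^\top A^{-1}g\ge0$, hence the inner product in Step 1 is $\le0$. If $J^\top s_b\ne0$ then $g^\top A^{-1}g>0$, so the inequality is strict.

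As a refinement, the SVD from the proof of Prop.~\ref{prop:restricted} gives the explicit value
\[
g^\top A^{-1}g=\sum_i\frac{\sigma_i^2\,(u_i^\top s_b)^2}{\sigma_i^2+\lambda}.
\]
This also shows that strictness holds exactly when $s_b$ has a component in $\operatorname{range}(J)$, which is equivalent to $J^\top s_b\ne0$.

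The only step needing care is strictness. It must use $\lambda>0$, which guarantees invertibility, together with $g\ne0$. It must not appeal to $J$ having full column rank, since $J$ may be rank-deficient.
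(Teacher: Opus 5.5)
Your proof is correct and follows the paper's argument: compute $\nabla\hat f(0)=2J^\top s_b$, substitute $\Delta_{\rm GN}$, and use $J^\top J+\lambda I\succ0$ for $\lambda>0$ to conclude that the quadratic form $(J^\top s_b)^\top(J^\top J+\lambda I)^{-1}(J^\top s_b)$ is nonnegative, with equality only when $J^\top s_b=0$. The paper uses the positive definiteness of the inverse without comment, so your $u=A^{-1}g$ step and the SVD formula are optional refinements that make the strictness step explicit.
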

\begin{proposition}[Bounded injection controls normalized step length]
\label{prop:bounded}
If an injected point is clipped in host-whitened coordinates to $\|y_e\|\le\kappa\sqrt d$, then $\|x_e-\bar x\|\le\kappa\sigma\lambda_{\max}(C^{1/2})\sqrt d$.
\end{proposition}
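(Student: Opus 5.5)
The bound follows by linking the host-whitened coordinates of the injected point to its ambient displacement. The host samples as $x=\bar x+\sigma C^{1/2}y$ with $y\sim\mathcal N(0,I)$. The injected point therefore has whitened coordinates $y_e=\sigma^{-1}C^{-1/2}(x_e-\bar x)$, defined because the host covariance $C$ is positive definite. By the bounded-injection rule of \citet{hansen2011injecting}, the clip $\|y_e\|\le\kappa\sqrt d$ is enforced on this vector, and $x_e$ is then reconstructed as $x_e=\bar x+\sigma C^{1/2}y_e$.

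Next take norms in the reconstruction identity. Submultiplicativity of the operator norm gives
\[
\|x_e-\bar x\|=\sigma\|C^{1/2}y_e\|\le\sigma\|C^{1/2}\|_2\,\|y_e\|.
\]
Since $C^{1/2}$ is symmetric positive semidefinite, its spectral norm equals its largest eigenvalue, so $\|C^{1/2}\|_2=\lambda_{\max}(C^{1/2})=\lambda_{\max}(C)^{1/2}$. Substituting the clip bound $\|y_e\|\le\kappa\sqrt d$ yields $\|x_e-\bar x\|\le\kappa\sigma\lambda_{\max}(C^{1/2})\sqrt d$, as claimed.

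The only step needing care is the convention for $C^{1/2}$. The whitening must use the same square-root factor as the reconstruction, namely the symmetric root. If the host instead uses a nonsymmetric factor $BD$ with $C=BD^2B^\top$ and $B$ orthogonal, then $\|BD\|_2=\max_i D_{ii}=\lambda_{\max}(C)^{1/2}$, so the bound is unchanged. I will state this explicitly so the result covers CMA-ES's eigendecomposition parameterization. Nothing else is needed: the clip is a hypothesis, and the rest is a one-line norm inequality.
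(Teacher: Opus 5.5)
Your proposal is correct and follows essentially the same route as the paper: write $x_e-\bar x=\sigma C^{1/2}y_e$, apply $\|C^{1/2}y_e\|\le\|C^{1/2}\|_2\|y_e\|$ with $\|C^{1/2}\|_2=\lambda_{\max}(C^{1/2})$, and substitute the clip bound. Your remark that the nonsymmetric factor $BD$ has the same spectral norm is a correct extra that the paper does not state.
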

\begin{proposition}[Trace-preserving covariance mix]
\label{prop:trace}
For $T\succ0$ and $C'=(1{-}w)C+w\operatorname{tr}(C)T/\operatorname{tr}(T)$, $\operatorname{tr}(C')=\operatorname{tr}(C)$.
\end{proposition}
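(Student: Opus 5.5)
The statement immediately above is Proposition~\ref{prop:trace}. The plan is a one-line linearity-of-trace computation. We use that $\operatorname{tr}$ is linear and that $\operatorname{tr}(T)>0$ whenever $T\succ0$, so the normalization is well defined.

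The first step is to justify the division by $\operatorname{tr}(T)$. Positive definiteness gives $\operatorname{tr}(T)=\sum_i\lambda_i(T)>0$, since all eigenvalues are strictly positive. In the CMIX instance, $T=(G+\lambda_{\mathrm{mix}}I)^{-1}$ with $G=J^\top J\succeq0$ and $\lambda_{\mathrm{mix}}>0$. Its eigenvalues are $(\sigma_i^2+\lambda_{\mathrm{mix}})^{-1}>0$, exactly as in the SVD computation in the proof of Proposition~\ref{prop:restricted}, so the hypothesis $T\succ0$ is automatically met.

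The second step expands the trace using linearity, treating $\operatorname{tr}(C)/\operatorname{tr}(T)$ as a scalar:
\[
\operatorname{tr}(C')=(1-w)\operatorname{tr}(C)+w\,\frac{\operatorname{tr}(C)}{\operatorname{tr}(T)}\operatorname{tr}(T)=(1-w)\operatorname{tr}(C)+w\operatorname{tr}(C)=\operatorname{tr}(C).
\]

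There is no real obstacle; the only point needing care is the nonvanishing of $\operatorname{tr}(T)$, handled above. Two remarks are worth adding for the paper's narrative, though neither is required by the statement. First, the identity holds for any scalar $w$. Second, for $w\in[w_{\min},w_{\max}]\subset[0,1]$ and $C\succeq0$, the mix $C'$ is a convex combination of positive semidefinite matrices, hence $C'\succeq0$. So CMIX returns a valid covariance with unchanged total variance, and it only changes the shape of $C$.
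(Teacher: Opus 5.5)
Your proposal is correct and follows the paper's proof: both reduce to linearity of the trace applied to the trace-normalized term $\operatorname{tr}(C)T/\operatorname{tr}(T)$. Your added justification that $\operatorname{tr}(T)>0$ under $T\succ0$, which the paper uses implicitly, and your remark that $C'\succeq0$ for $w\in[0,1]$ are both valid and harmless extras.
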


\noindent\textbf{Proof of Proposition~\ref{prop:blockwise}.} Let the coordinate partition be $\mathcal G=\{G_1,\ldots,G_q\}$ and suppose the fitted normal matrix has no cross-block terms: $J^\top J=\operatorname{blkdiag}(G_1^\top G_1,\ldots,G_q^\top G_q)$. The damped system $(J^\top J+\lambda I)\Delta=-J^\top s$ is then block diagonal, so each $\Delta_{G_j}$ depends only on the residual slopes and residual values assigned to block $G_j$. If damping is taken to zero and the continuous block step is restricted to the finite set of historical fragments available for that block, the update reduces to choosing the fragment that minimizes the corresponding component residual. This is the per-component assembly rule. The condition is deliberately restrictive: once off-diagonal blocks are kept, JANUS can express interactions among coordinate groups that assembly discards.

\noindent\textbf{Proof of Proposition~\ref{prop:descent}.} For the fitted model $\hat f(\Delta)=\|s_b+J\Delta\|_2^2$, $\nabla \hat f(0)=2J^\top s_b$. Substituting $\Delta_{\rm GN}=-(J^\top J+\lambda I)^{-1}J^\top s_b$ gives
\[
\nabla \hat f(0)^\top\Delta_{\rm GN}
=-2(J^\top s_b)^\top(J^\top J+\lambda I)^{-1}(J^\top s_b).
\]
Because $J^\top J+\lambda I\succ0$ for $\lambda>0$, the quadratic form is nonnegative, and it is positive unless $J^\top s_b=0$. Therefore the GN proposal is a descent direction for the fitted residual objective whenever the fitted local gradient is nonzero. This is not a global improvement theorem: if the fitted residual chart is inaccurate outside the window, the true objective can still fail to improve.

\noindent\textbf{Proof of Proposition~\ref{prop:bounded}.} Host-whitened coordinates are defined by $y_e=C^{-1/2}(x_e-\bar x)/\sigma$, equivalently $x_e-\bar x=\sigma C^{1/2}y_e$. Hence
\[
\|x_e-\bar x\|\le \sigma\|C^{1/2}\|_2\|y_e\|
\le \kappa\sigma\lambda_{\max}(C^{1/2})\sqrt d.
\]
Thus clipping controls the injected step in exactly the coordinate system used by the host distribution. The proposition does not claim that JANUS leaves the stochastic step-size path unchanged; it claims the weaker and needed safety property that injected candidates are scale-compatible with ordinary samples.

\noindent\textbf{Proof of Proposition~\ref{prop:trace}.} Let $T'=\operatorname{tr}(C)T/\operatorname{tr}(T)$. Then $\operatorname{tr}(T')=\operatorname{tr}(C)$ by linearity of trace, and therefore
\[
\operatorname{tr}((1-w)C+wT')=(1-w)\operatorname{tr}(C)+w\operatorname{tr}(T')=\operatorname{tr}(C).
\]
CMIX can rotate or anisotropically reshape the proposal covariance through $T'$, but it cannot increase the total covariance trace. These results are local: $T$ can reallocate variance among plausible nearby directions, but disconnected basin discovery still requires host diversity, restarts, or another global exploration mechanism.

\paragraph{Implementation notes.} JANUS is implemented as a bounded candidate generator, not as a replacement optimizer. The host proposes its ordinary offspring; JANUS reads the recent evaluated window, fits the residual chart, and contributes a small number of GN/CMIX candidates. All candidates are evaluated before the host performs its standard tell/update step. Selection, survival, covariance adaptation, step-size adaptation, archive maintenance, and restart logic remain owned by the host.

In CMA-style hosts, a proposed displacement is transformed into host-whitened coordinates and clipped to the same $\sqrt d$ scale as ordinary samples. In DE and MOO hosts, we apply analogous box and step-length constraints before evaluation. This keeps JANUS proposals on the host sampling scale and prevents one ill-conditioned Jacobian estimate from dominating adaptation.

For scalar objectives, the residual chart is rebuilt frequently from local scalar fits with standardized design matrices, ridge regularization, and rank caps for the quadratic chart. For UAV, component costs define a more meaningful residual space; for MOO, the objective vector itself provides the residual coordinates. Changing the host requires only the ask/tell interface, the available residual coordinates, and the fixed hyperparameters in Table~\ref{tab:hparams}. JANUS-GN is the conservative fallback because it uses the same local Jacobian for one bounded exploitation candidate without reshaping the host covariance.

\end{document}